\theoremstyle{definition}
\newtheorem{problem}{Problem}
\newtheorem{theorem}{Theorem}
\newtheorem{assumption}{Assumption}
\newtheorem{observation}{Observation}
\newtheorem*{remark*}{Remark}
\newtheorem{lemma}{Lemma}
\newcommand{\be}{\begin{equation}}
\newcommand{\bs}{\begin{split}}
\newcommand{\es}{\end{split}}
\newcommand{\ee}{\end{equation}}
\newcommand{\T}{\mathcal{T}}
\newcommand{\E}{\mathbb{E}}
\newcommand{\TSP}{\pi^{\text{TSP}}}
\newcommand{\xoverbrace}[2][\vphantom{\dfrac{A}{A}}]{\overbrace{#1#2}}
\newcommand{\Batch}{\mathtt{BATCH}}
\newcommand{\DC}{\mathtt{DC}}
\newcommand{\Cont}{\mathtt{EVENT}}
\newcommand{\rpm}{\raisebox{.2ex}{$\scriptstyle\pm$}}
\newcommand{\blue}[1]{\textcolor{blue}{#1}}
\newcommand{\Told}{\mathcal{T}^{\text{old}}}
\newsavebox\myboxA
\newsavebox\myboxB
\newlength\mylenA
\newcommand*\xoverline[2][0.75]{%
 \sbox{\myboxA}{$\m@th#2$}%
 \setbox\myboxB\null% Phantom box
 \ht\myboxB=\ht\myboxA%
 \dp\myboxB=\dp\myboxA%
 \wd\myboxB=#1\wd\myboxA% Scale phantom
 \sbox\myboxB{$\m@th\overline{\copy\myboxB}$}% Overlined phantom
 \setlength\mylenA{\the\wd\myboxA}% calc width diff
 \addtolength\mylenA{-\the\wd\myboxB}%
 \ifdim\wd\myboxB<\wd\myboxA%
  \rlap{\hskip 0.5\mylenA\usebox\myboxB}{\usebox\myboxA}%
 \else
  \hskip -0.5\mylenA\rlap{\usebox\myboxA}{\hskip 0.5\mylenA\usebox\myboxB}%
 \fi}
\title{%\LARGE \bf 
   Optimizing Task Waiting Times \\ 
   in  Dynamic Vehicle Routing
}
\author{
    Alexander Botros, Barry Gilhuly, Nils Wilde, Armin Sadeghi, Javier Alonso-Mora, and Stephen L.\ Smith
    \thanks{This research is partially supported by the Natural Sciences and Engineering Research Council of Canada (NSERC) and by the European Union's Horizon 2020 research and innovation program under Grant 101017008.}
    \thanks{A.\ Botros, B.\ Gilhuly, A.\ Sadeghi, and S.\ L.\ Smith are with the Department of Electrical and Computer Engineering, University of Waterloo, Canada. N.\ Wilde and J.\ Alonso-Mora are with Delft University of Technology.
    \texttt{\{n.wilde,j.alonsomora\}@tudelft.nl}
    \texttt{\{abotros,a6sadegh,bgilhuly,stephen.smith\}}\quad\texttt{@uwaterloo.ca}}
    \thanks{Code available at~\protect\url{https://github.com/arminsadeghi/mDVRP-optimal-policy}}
}
		\footnotesize \copyright{}2023 IEEE. Personal use of this material is permitted.  Permission from IEEE must be obtained for all other uses, in any current or future media, including reprinting/republishing this material for advertising or promotional purposes, creating new collective works, for resale or redistribution to servers or lists, or reuse of any copyrighted component of this work in other works.
\begin{document}
%\IEEEpubid{\makebox[\columnwidth]{\copyright{}2023 IEEE. Personal use of this material is permitted.  Permission from IEEE must be obtained for all other uses, in any current or future media, including reprinting/republishing this material for advertising or promotional purposes, creating new collective works, for resale or redistribution to servers or lists, or reuse of any copyrighted component of this work in other works. \hfill} \hspace{\columnsep}\makebox[\columnwidth]{ }}
%\copyrightstatement
\maketitle

\begin{abstract}
We study the problem of deploying a fleet of mobile robots to service tasks that arrive stochastically over time and at random locations in an environment. This is known as the Dynamic Vehicle Routing Problem (DVRP) and requires robots to allocate incoming tasks among themselves and find an optimal sequence for each robot.
State-of-the-art approaches only consider average wait times and focus on high-load scenarios where the arrival rate of tasks approaches the limit of what can be handled by the robots while keeping the queue of unserviced tasks bounded, \textit{i.e.,} stable. To ensure stability, these approaches repeatedly compute minimum distance tours over a set of newly arrived tasks. This paper is aimed at addressing the missing policies for moderate-load scenarios, where quality of service can be improved by prioritizing long-waiting tasks.
We introduce a novel DVRP policy based on a cost function that takes the $p$-norm over accumulated wait times and show it guarantees stability even in high-load scenarios. We demonstrate that the proposed policy outperforms the state-of-the-art in both mean and $95^{\text{th}}$ percentile wait times in moderate-load scenarios through simulation experiments in the Euclidean plane as well as using real-world data for city scale service requests.
\end{abstract}

\begin{IEEEkeywords}
Path Planning for Multiple Mobile Robots or Agents; Planning, Scheduling and Coordination; Task Planning
\end{IEEEkeywords}

\IEEEpubidadjcol

%--------------------------------------------------------
\section{Introduction}
%--------------------------------------------------------
The Dynamic Vehicle Routing Problem is an important and long-studied challenge in assigning autonomous vehicles (or robots) to tasks as they appear in the environment. Applications include pickup-and-delivery \cite{grippa2019drone, gao2021capacitated}, mobility-on-demand transportation systems~\cite{zardini2022analysis, zhang2016control, alonso2017demand}, sensor networks \cite{bullo2011dynamic, pavone2010adaptive}, surveillance \cite{ozkan2021uav, liu2019optimization}, personal care \cite{sadeghi2018re, wilde2022online}, and environmental monitoring~\cite{chandarana2021planning, sousa2020decentralized,smith2011persistent}.

In a typical application, one or more robots assign and schedule incoming tasks to optimize their quality of service. When the set of tasks is finite and known \emph{a priori}, the resulting problem is known as \emph{Vehicle Routing}: a fleet of $m$ robots services $n$ tasks while maximizing some service measure. When new tasks arrive over time, the problem becomes the \emph{dynamic vehicle routing problem} (DVRP).
The conventional approach for multiple-robot vehicle routing with provable properties~\cite{bullo2011dynamic, pavone2010adaptive} is to divide the environment into equitable partitions. A robot is assigned to each partition thus simplifying the multiple-robot problem into several single-robot instances. In this paper we seek to improve multi-robot performance by making advances in the underlying single-robot policies.

\IEEEpubidadjcol

A characteristic of DVRPs is the load factor $\rho\in(0,1)$~\cite{bertsimas1991stochastic,smith2010dynamic,bajaj2019dynamic}. Letting $\lambda$ denote the arrival rate of incoming tasks and letting $\bar{s}$ be the expected service time, the load factor is given by $\rho=\lambda \bar{s}/m$ for $m$ robots. It captures the fraction of time the robot fleet \emph{must} be working to service tasks. In light load $\rho \rightarrow 0^+$, the queue length of unserviced tasks approaches zero, and robots have enough time between task arrivals to move to optimal waiting locations~\cite{bullo2011dynamic}. In heavy load $\rho \rightarrow 1^-$, all robots are continuously servicing tasks with no idle time, and the queue length of unserviced tasks approaches infinity. 
Neither of these states is operationally desirable: under light load, robots are underutilized, and under heavy load, task wait times become undesirably long~\cite{bertsimas1991stochastic}.
Previous work focused on guarantees under the extremes of light or heavy load, neglecting the space where these systems  operate ideally~\cite{whitt1992understanding}. 

\begin{figure}[t]
 \centering
 \begin{subfigure}{0.5\linewidth}
 \centering
 \includegraphics[width=0.8\linewidth]{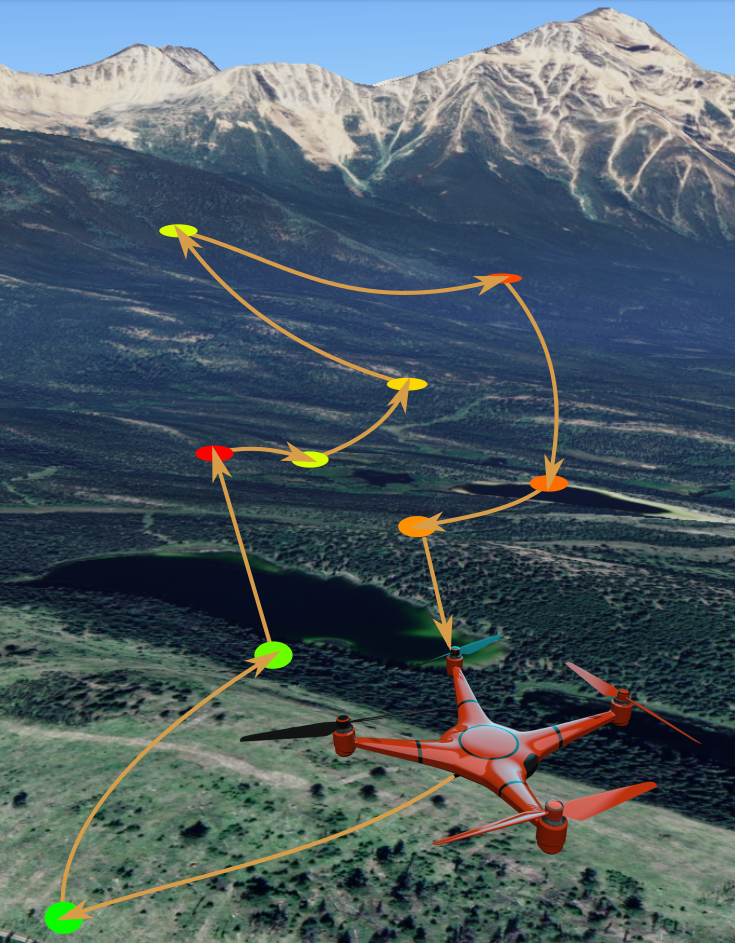}
 \caption{Minimize tour length}
 \label{fig:intro_tsp}
 \end{subfigure}%
 \begin{subfigure}{0.5\linewidth}
  \centering
  \includegraphics[width=0.8\linewidth]{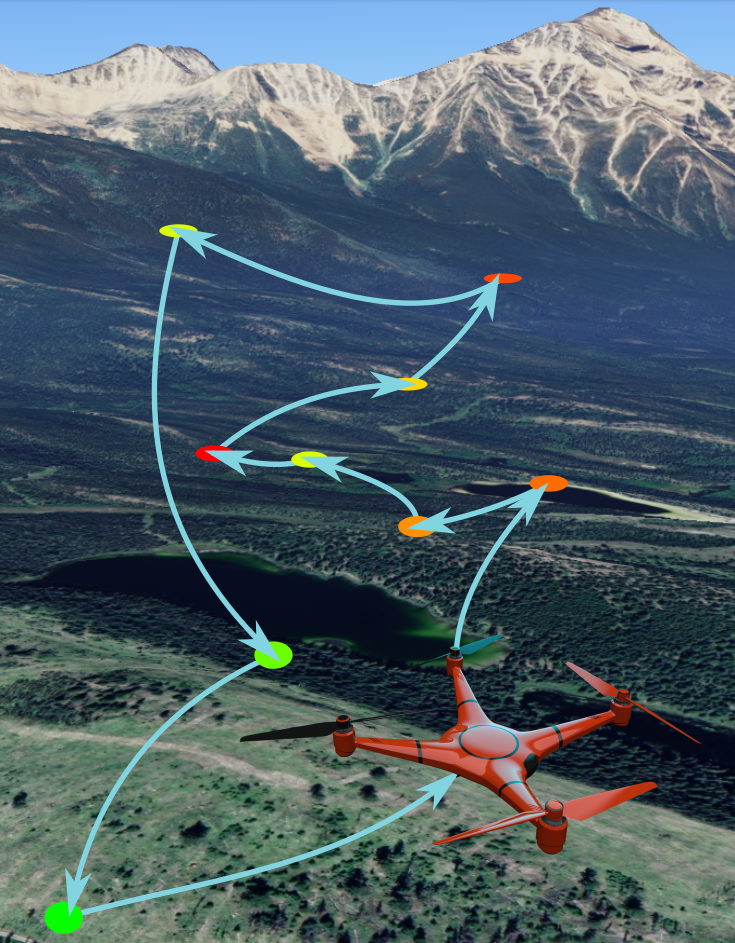}
  \caption{Minimize proposed cost}
  \label{fig:intro_ours}
 \end{subfigure}
 \caption{\footnotesize Effect of different cost functions on the optimal tour of a monitoring UAV. Triggered ground sensors add observation tasks (colored dots) for the UAV. Color (increasing values from green to red) represents how long a task has already been waiting when the tour is planned  (latent wait time).}
 \label{fig:intro_fig}
\end{figure}

In this paper, we focus on systems operating under moderate load,
where for example $\rho \in [.5,.9]$. Moderate load is arguably of the most practical significance, since it corresponds to the case where the queue of unserviced tasks is neither empty nor approaching infinite length.
In fact, as a rule of thumb, the load factor of a robot should be below $0.9$~\cite{whitt1992understanding} since the length of the outstanding task queue is proportional to $1/(1 - \rho)^2$~\cite{bertsimas1993stochastic}.
We propose a policy for moderate loads that seeks to solve the multi-objective problem of minimizing both average and maximum wait times.
To that end, we study a cost function to compute new tours where we minimize the sum of the wait times, each raised to an exponent $p > 1$. Under this cost function, tasks gain priority by waiting sufficiently long. While there is some advantage to this cost function under heavy load ($\rho \rightarrow 1^-$), the advantage disappears as the queue grows and travel distances between tasks approach zero. 
We illustrate the advantages of our method in Figure \ref{fig:intro_fig} for a monitoring application such as forest fire detection \cite{sudhakar2020unmanned}. A network of low-cost sensors (\textit{e.g.,} distributed in the environment, onboard high-altitude drones, or satellites) can detect points of interest. Upon detection, a drone with high-resolution sensors is tasked to visit these locations to collect more accurate data.
State-of-the-art methods~\cite{pavone2010adaptive} find tours of minimal length to minimize total mission time, shown in Figure~\ref{fig:intro_tsp}). In contrast, our method, shown in Figure~\ref{fig:intro_ours}), gives a higher priority to tasks that have waited longer and thus visits these locations earlier. Tasks with longer waiting times (red) are serviced earlier in the tour using our method, resulting in a lower mean service time for all tasks. This results in a lower average and maximum wait time.

%--------------------------------------------------------
\textbf{Related Work:}
%--------------------------------------------------------
Vehicle Routing problems have been studied in many forms over the past several decades~\cite{psaraftis1988dynamic,psaraftis1980dynamic, bertsimas1991stochastic}, with active research continuing today~\cite{rios2021recent, aksaray2016dynamic, vasile2017minimum, sarkar2018scalable, psaraftis2016dynamic, toth2014vehicle, soeffker2021stochastic, bopardikar2019dynamic}. Following the taxonomy in~\cite{psaraftis2016dynamic}, we consider the \emph{dynamic and stochastic} variant of DVRP where new tasks arrive online (dynamic) according to a stochastic process (stochastic) and are randomly distributed in the environment.

The categorization of light and heavy load states was considered in \cite{bertsimas1991stochastic}. The bounds on the performance of the optimal policy in heavy load given in~\cite{bertsimas1991stochastic, bertsimas1993stochastic} are a building block for a number of studies on DVRP variants~\cite{bullo2011dynamic, smith2010dynamic, pavone2010adaptive}. 
In~\cite{pavone2010adaptive}, the authors propose a $\Batch$ policy where the TSP is calculated on the task locations of a batch of tasks, and at each iteration, a randomly chosen fraction of the batch is serviced. The authors show that this approach is within a factor of two of optimal in heavy load.
The aforementioned studies focused on heavy or light loads and proposed policies that compute tours which minimize the total travel time (\emph{i.e.,} TSP minimization).
An alternative policy is to minimize the maximum wait time and/or the average wait time~\cite{campbell2008routing, huang2012models, kulich2017multi} in the context of fairness among tasks. 
Closely related to our approach, \cite{ferrucci2015general} uses a vehicle routing cost based on the square of wait times; 
while the cost function is a special case of our cost function where the wait time is raised to some exponent $p>1$, the policy is different. In \cite{ferrucci2015general} routes are recomputed whenever new tasks arrive, while our work uses the $\Batch$ approach from \cite{pavone2010adaptive}. This allows us to derive theoretical guarantees on system stability which are not provided in \cite{campbell2008routing, huang2012models, kulich2017multi,ferrucci2015general}.

The DVRP finds widespread applications in various robotics and transportation domains.
Fleets of autonomous vehicles are deployed for pickup-and-delivery throughout cities \cite{grippa2019drone, gao2021capacitated} or for providing on-site service in indoor environments such as hospitals \cite{sadeghi2018re, wilde2022online}. In many cases, vehicles need to react to new requests appearing over time.
For instance, in autonomous mobility on demand \cite{zardini2022analysis, zhang2016control, alonso2017demand} it is desired to minimize the average response time to the incoming requests, while also limiting maximum waiting times.
Other applications include the deployment of autonomous drones as mobile sensor networks \cite{bullo2011dynamic, pavone2010adaptive}, for surveillance \cite{ozkan2021uav, liu2019optimization}, search and rescue \cite{chandarana2018determining}, and in environmental monitoring~\cite{chandarana2021planning, sousa2020decentralized,smith2011persistent}.

%--------------------------------------------------------
\textbf{Contributions:}
%-------------------------------------------------------- 
Our contributions are as follows. First, we propose a new $\Batch$ policy using the sum of wait times, each raised to an exponent $p$, as the cost. This actively considers the time a task has already waited and thus allows for prioritizing long-waiting tasks. Second, we establish a relation between the length of the optimal tour for the proposed cost and the number of tasks in the tour, and prove that for $p> 1$ the proposed policy is stable under any load factor $\rho<1$, providing the first stability result for a policy with this type of cost function. Finally, we demonstrate that under moderate load
our policy achieves a lower average wait-time and reduces the number of outlying tasks with high wait-time.

%--------------------------------------------------------
\section{Problem Formulation}
%--------------------------------------------------------

In this section, we revisit the formulation of the well-known DVRP. Consider a convex and closed environment $\mathcal{E} \subset \mathbb{R}^d$ where $d$ is the dimension. A set of $m$ vehicles traveling with constant speed $v$ are servicing the tasks arriving in $\mathcal{E}$. 
The tasks arrive according to a Poisson process with time-intensity $\lambda \in \mathbb{R}_{>0}$. Task locations are distributed according to a spatial density $\varphi : \mathcal{E} \rightarrow \mathbb{R}_{>0}$. Task arrivals are i.i.d (independent and identically distributed), and servicing a task requires a vehicle to visit the location of the task. The time required to service task $j$ at its location, denoted by $s_j$, is i.i.d with finite first and second moments, \textit{i.e.,} $\overline{s}$ and $\overline{s^2}$. The \emph{load factor} for this system of $m$ vehicles is defined as $\rho = \lambda \overline{s}/m$. 
Given the arrived tasks and the current vehicle locations, a policy repeatedly computes tours for each vehicle.
The wait time of task $j$, denoted by $W_j$, is the time between the arrival time of the task and the time that a vehicle arrives at the task location and starts servicing it. The system time of task $j$, denoted by $T_j$, is the time between the arrival of the task and the time that a vehicle completes servicing the task, \textit{i.e.,} $T_j = W_j + s_j$. The steady-state system time is denoted by $\overline{T} = \lim \mathrm{sup}_{j \rightarrow \infty} \mathbb{E}[T_j]$. Let $\pi$ denote a tour of tasks found by following a routing policy, and let $\overline{T}_{\pi}$ be the corresponding system time. A policy is \emph{stable} if the expected number of outstanding tasks is uniformly bounded at all times. Finally, we assume that vehicles have sufficient capacity for all tasks and do not need to return to the depot.

\begin{problem}[DVRP]
\label{prob:DVRP}
Given an environment $\mathcal{E}$, $m$ vehicles, and tasks arriving according to a stochastic process with density $\lambda$, spatial distribution $\varphi$, and random service times following a Gaussian distribution $\mathcal{N}(\overline{s}, \sigma)$, find a stable policy that computes a tour $\pi$ minimizing the system time, \textit{i.e.,} $\mathrm{inf}_{\pi} \overline{T}_{\pi}$.
\end{problem}

%--------------------------------------------------------
\section{Approach}
%--------------------------------------------------------
We adapt a state-of-the-art partitioning approach~\cite{pavone2010adaptive} to cast the multi-robot setting of the DVRP into a set of single-robot instances.
After reviewing previously proposed DVRP policies with provable stability guarantees, we present a novel single vehicle policy based on a cost function where we take the $p$-norm of wait times and show that this policy is also stable.

%--------------------------------------------------------
\subsection{Vehicle Routing Cost Function}
%--------------------------------------------------------

We study how a single vehicle computes routes to service tasks in the environment $\mathcal{E} \subset \mathbb{R}^d$.
At a start time $t_0$, let $\T$ be set of $n$ tasks that have arrived but not been serviced. Further, let $x_s$ be the vehicle's starting pose, let $x_i, i=1,\dots,n$ denote the pose of task $\tau_i \in \T$, and let $L_{i,j}=||x_i - x_j||/v$ denote the travel time between $x_i, x_j$ for all $i,j\in\{1,\dots,n\}\cup\{0\}$. We assume constant speed $v$, which, without loss of generality, we set to 1 implying that length and travel time for any segment are equivalent. For each task $\tau_i$, let $t_i\geq 0$ denote the \emph{latent wait time} of task $\tau_i\in\T$: the difference between the time of planning the current tour and when $\tau_i$ arrived.  Finally, let $s_i$ denote the time required to service task $\tau_i$.  Thus, we decompose the system time, $T_i$, into three components: $T_i = t_i + \left\{\textit{tour travel time to $\tau_i$}\right\}_i + s_i$. 

Using these definitions, a tour $\pi$ is an ordering of task service visits that minimizes some undesirable properties. Given an arbitrary tour $\pi=(x_s,\tau_{1}, \tau_{2},\dots,\tau_{n})$ as well as a value $p\in\mathbb{N}_{\geq 1}$, we consider a cost function:
\begin{equation}
\label{eq:cp_cost}
 \begin{split}
  c^p(\pi) 
  &= \Bigg(\sum_{i=1}^n
  \bigg(
  \xoverbrace{ t_{i}}^{\mathclap{\text{latent wait time of task } \tau_i}} +\underbrace{\sum_{j=1}^i( \frac{L_{j, j-1}}{v} + \bar{s}}_{\mathclap{\text{time to reach and service task } \tau_i}} 
  )
  \bigg)^p
  \Bigg)^{1/p}.
 \end{split}
\end{equation}

We notice that $c^p$ represents the $L^p$-norm of the waiting time of each task in $\T$ assuming service times $s_i$ equal the expected value $\bar{s}$. We assume that the service time of any task is not revealed before servicing \cite{bullo2011dynamic}. Thus, treating service time as $\bar{s}$ when evaluating a candidate tour is not unreasonable. Since $\bar{s}$ is constant, in the remainder of this paper, we denote $\nicefrac{L_{j, j-1}}{v} + \bar{s}$ as simply $l_{j, j-1}$ for brevity. We offer an observation:

\begin{figure*}[t]
\centering
  \includegraphics[width = 0.95\linewidth]{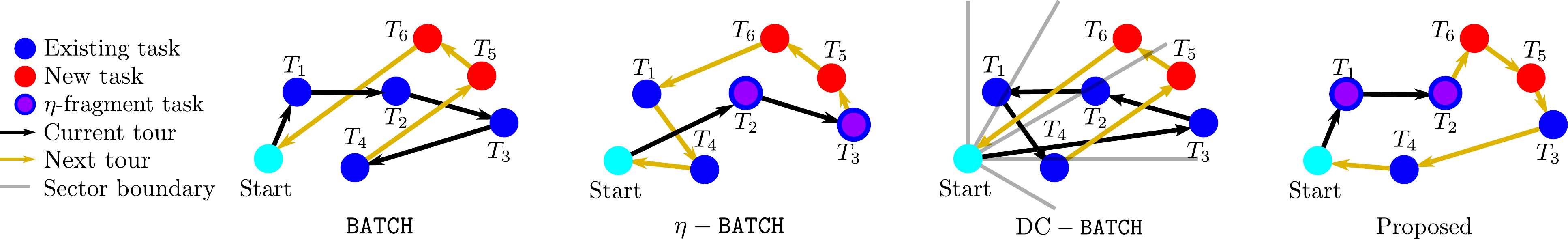}
  \caption{\footnotesize Example solutions for the different archetypes on the same tasks. Here, $x_s=\mathtt{Start}=x_{\text{waiting}}$. Tasks $\tau_1 - \tau_4$ arrived before planning the current tour, tasks $\tau_5, \tau_6$ arrive immediately after the tour commences execution.}
  \label{fig:Archetypes}
\end{figure*}

\begin{observation}[Variants of $c^p$ cost]
The cost $c^p(\pi)$ generalizes previously proposed cost functions:
\begin{enumerate}
 \item Minimizing $c^1(\pi)$ is equivalent to minimizing the mean wait time of all tasks in $\T$. Equation \eqref{eq:cp_cost} reduces to the sum of wait times and long waits are no longer penalized.
 
 \item Minimizing $c^{\infty}(\pi)$ is equivalent to minimizing the maximum wait time of all tasks in $\T$ since the longest wait time becomes the dominant term and is heavily penalized.
 If $t_{i}=0$ holds (\textit{i.e.,} all tasks arrive at the beginning of the tour), then 
 the problem is equivalent to minimizing the total travel distance \textit{i.e.,} solving the \emph{classic} TSP.
  
\end{enumerate}
\end{observation}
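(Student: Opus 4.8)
The plan is to verify the two clauses directly from the definition of $c^p$ in~\eqref{eq:cp_cost}, handling $p=1$ and the limit $p\to\infty$ separately; no auxiliary machinery is needed.

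For the first clause I would set $p=1$, so that the outer exponent $1/p$ and the inner exponent $p$ both equal $1$ and~\eqref{eq:cp_cost} collapses to
\[
c^1(\pi)=\sum_{i=1}^n\Big(t_i+\sum_{j=1}^i l_{j,j-1}\Big)=\sum_{i=1}^n W_i,
\]
where $W_i$ denotes the wait time of $\tau_i$ under the ordering $\pi$ with service times set to $\bar s$. Since $n=|\T|$ is fixed at the planning instant, $\argmin_\pi c^1(\pi)=\argmin_\pi \tfrac1n\sum_{i=1}^n W_i$, which is precisely the mean wait time; and since the objective is an unweighted sum, no individual $W_i$ is penalized more than the others.

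For the second clause I would invoke the standard fact that the $L^p$-norm of a fixed vector with nonnegative entries converges to its $L^\infty$-norm as $p\to\infty$, applied entrywise to the wait-time vector $(W_1,\dots,W_n)$ of each fixed ordering $\pi$; this gives $c^\infty(\pi):=\lim_{p\to\infty}c^p(\pi)=\max_{i\in\{1,\dots,n\}} W_i$, so minimizing $c^\infty$ is minimizing the maximum wait time. For the TSP specialization, put $t_i=0$ for all $i$. Then $W_i=\sum_{j=1}^i l_{j,j-1}$ is a partial sum of the nonnegative quantities $l_{j,j-1}=L_{j,j-1}/v+\bar s$, hence nondecreasing in $i$, so the maximum is attained at $i=n$ and $c^\infty(\pi)=\sum_{j=1}^n l_{j,j-1}=\sum_{j=1}^n L_{j,j-1}/v+n\bar s$. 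Because $n\bar s$ is a constant independent of $\pi$, minimizing $c^\infty$ reduces to minimizing $\sum_{j=1}^n L_{j,j-1}$, i.e.\ the total travel time of the open tour rooted at $x_s$ — the classic TSP.

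The argument is essentially bookkeeping, and the only step requiring care is the $p\to\infty$ passage: one must state that $c^\infty$ is defined as the pointwise (finite-dimensional, nonnegative-entry) limit of $c^p$, and then observe that under $t_i=0$ the monotonicity of the cumulative sums $W_i$ is exactly what collapses the maximum to the full tour length. Beyond making these observations precise, I do not anticipate a substantive obstacle.
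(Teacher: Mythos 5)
Your proof is correct and follows essentially the same reasoning the paper gives (the Observation is stated with only an inline justification, which your bookkeeping — $p=1$ collapsing to the sum/mean, the finite-dimensional $L^p\to L^\infty$ limit, and the monotonicity of the cumulative sums under $t_i=0$ reducing the max to the total tour length up to the constant $n\bar{s}$ — makes precise). No gaps; the only cosmetic point is that the summands in \eqref{eq:cp_cost} are wait times shifted by the constant $\bar{s}$ per task, which does not affect any of the stated equivalences.
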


%--------------------------------------------------------
\subsection{Solution Archetypes and Proposed Approach}
\label{sec:SolutionArchs}
%--------------------------------------------------------
\begin{algorithm}[!t]
\small
\caption{Generalized Policy}
\label{alg:TOUR}
\begin{algorithmic}[1]
\Procedure{Tour}{$\T,p\geq 1, \eta\in(0,1], p_t\in\{0,1\}, \text{R}\in\{0,1\}$}
\State Replace $t_i$ with $p_tt_i$ in $c^p$ (equation \eqref{eq:cp_cost})
\While{$\T\neq \emptyset$}
\State Compute a $c^p$-minimizing tour $\pi^*$ of $\T$
\State ($x_s, x_{1}, x_{2},\dots,x_{n})\gets\pi^*$
\State index $\gets \big(\text{RandomInteger}(1, n)\big)^R$ 
\State Service tasks $\tau_{\text{index}},\dots, \tau_{\text{index}+\eta n}$ in order
\State Remove tasks $\tau_{\text{index}},\dots, \tau_{\text{index}+\eta n}$ from $\T$
\If{a new task $\tau_{n+1}$ arrives at any time}
\State add $\tau_{n+1}$ to $\T$
\EndIf
\EndWhile
\State return to $x_{\text{waiting}}$
\EndProcedure
\end{algorithmic}
\normalsize
\end{algorithm}

In this section, we present three state-of-the-art approaches to Problem \ref{prob:DVRP} as well as our proposed method. 
We begin with a generalized algorithm (Algorithm \ref{alg:TOUR}) that, with appropriately selected inputs, captures all four approaches.
The algorithm \verb|Tour| takes as input a set of $n$ tasks $\T$, an exponent $p$ for the $c^p$ cost defined in \eqref{eq:cp_cost}, a value $\eta\in(0,1]$, and two boolean variables $p_t$ and $R$. The high-level idea behind the algorithm is this: we start by computing a tour $\pi^*$ that minimizes $c^p$ over all tours of $\T$ (Line 4) and then select a fragment of length $\eta n$ --- which we refer to as an $\eta$-\emph{fragment} --- of this tour to service (Lines 6 - 8). The boolean input $R$ controls which fragment to service. If $R=0$ then index$=1$ (Line 6) and we service the first $\eta$-fragment, whereas if $R=1$ then we service a random $\eta$-fragment (Lines 7-8). If a new task $\tau_{n+1}$ arrives during this time, it is added to $\T$ (Lines 9 - 10). Once the $\eta$-fragment has been serviced, the process repeats. Finally, the boolean variable $p_t$ controls whether the latent wait time is considered in the cost $c^p$. If there are no outstanding tasks (Line 11), the robot returns to a given waiting pose $x_{\text{waiting}}$ usually defined as the centroid of the workspace \cite{bullo2011dynamic,pavone2010adaptive}.

%----------------------------
\textbf{\texttt{BATCH} policy:}
% \textbf{$\Batch$ policy: }
The $\Batch$ policy proposed in~\cite{bullo2011dynamic} has the structure $\verb|TOUR|(\T, p=\infty, \eta=1, p_t=0, R=0)$.
Since $p=\infty$, and $p_t=0$, the cost $c^p$ is the total length of the tour, and $c^p$ is minimized by a tour $\TSP$ that solves the TSP. Further, since $\eta=1, R=0$, the full set of tasks $\T$ is serviced before re-planning. 

%----------------------------
\textbf{$\boldsymbol{\eta}$-\texttt{BATCH} policy:}
% \textbf{$\eta$-$\Batch$ policy:}
In contrast, the authors of~\cite{pavone2010adaptive} proposed $\eta$-$\Batch$, which we capture with $\verb|TOUR|(\T, p=\infty, \eta\in(0,1), p_t=0, R=1)$. Again, the tour $\pi^*$ in Line 4 of Algorithm \ref{alg:TOUR} is $\TSP$ that solves the TSP for tasks $\T$. However, since $\eta\in(0,1)$, a $\eta$-fragment is randomly selected from $\pi^*$ and serviced before re-planning. 

The notion of a $\eta$-fragment is proposed in~\cite{pavone2010adaptive} to improve the average wait time of tasks in $\T$. If $\eta=1$, then new tasks wait for all existing tasks to be serviced before they are considered. However, if $\eta \in(0,1)$ and the first $\eta$-fragment is serviced before re-planning, then it is possible for some tasks to be continually overlooked. Thus, the $\eta$-$\Batch$ policy arbitrarily selects an $\eta$-fragment to service before re-planning ($R=1$) which ensures that, in expectation, no task must wait longer than $\nicefrac{1}{\eta}$ iterations of Algorithm \ref{alg:TOUR} before servicing.

%----------------------------
\textbf{\texttt{DC}-\texttt{BATCH} policy:}
The single vehicle Divide and Conquer strategy~\cite{pavone2010adaptive} partitions the environment into $r$ equitable sectors. The algorithm plans a TSP tour for the tasks that arrive in a sector, services them, and then moves to the next sector with tasks in a round-robin fashion.

%----------------------------
\textbf{Proposed $\boldsymbol{c^p}$-\texttt{BATCH} policy:}
We propose a new policy labeled $c^p$-$\Batch$ which has the form $\verb|TOUR|(\T, p\in(1, \infty), \eta\in(0, 1), p_t=1, R=0)$. That is, we include $t_i$ in the cost \eqref{eq:cp_cost} and thus do not find a tour $\pi^*$ of minimal length, but minimal $c^p$-cost. Further, we do not select the $\eta$-fragment randomly.

The motivation for our approach is that values $p\in(1, \infty)$ and $p_t=1$ incorporate the latent wait time of tasks into the cost. If tasks have been waiting too long to be serviced, it will become too costly not to service them. Thus we adopt the $\eta$-fragment strategy of~\cite{pavone2010adaptive} to improve performance without having to also impose randomness to mitigate long maximum wait times. 
The value $p=1$ has been excluded since it places no additional penalty on long waits, leading to the possibility of distant tasks being ignored~\cite{psaraftis1988dynamic}.
Examples of the $\Batch, \eta-\Batch$, $\DC$-$\Batch$, and proposed approaches are illustrated in Figure \ref{fig:Archetypes}.
Observe that $\eta-\Batch$ services  \blue{$\tau_5$} first: the randomness of the $\eta$-fragment causes this approach to miss \blue{$\tau_1$} which was closer to the start location. 
This does not occur in the proposed approach which services the first $\eta$-fragment before re-planning.  The stability of the proposed method is established next.

%--------------------------------------------------------
\section{Stability Analysis}
In this section, we prove the stability of the proposed approach. We use the following short-hand notation. Given a tour $\pi=(\tau_0,\dots,\tau_n)$ of a set of tasks $\T$, we use $l_j$ to represent the euclidean distance between tasks $\tau_{j-1},\tau_j\in \T$. That is, $l_j = l_{j, j-1}$. Further, $l(\pi)$ denotes the total length of $\pi$.
We start with assumptions:
\begin{assumption}
\label{ass:load}
Both $\bar{s}$ and $\lambda$ are known and the expected load factor is $\bar{\rho}=\lambda\bar{s}<1$.
\end{assumption}
\begin{assumption}
\label{ass:euclid}
All tasks arrive in a connected planar region with area $A$, finite perimeter $P$, and maximum Euclidean distance between tasks $Q'$.
\end{assumption}
\begin{assumption}
\label{ass:pre_wait}
At iteration 0 of Algorithm \ref{alg:TOUR}, the number of unserviced tasks is finite and  latent wait times $t_i$ are bounded by an arbitrarily large constant $R$ (\textit{i.e.,} $R = \max_{t_i \in \mathcal{T}} t_i$). 
\end{assumption}
Assumptions \ref{ass:load} and \ref{ass:euclid} are common in the literature (e.g.~\cite{bullo2011dynamic, pavone2010adaptive}). 
Further, Assumption \ref{ass:load} does not include the spatial distribution of tasks $\varphi$ and the variance of service times, $\overline{s^2}$ since the stability result does not depend on them.
Assumption \ref{ass:pre_wait} is reasonable since it states only existing tasks at the start have not been waiting for unbounded time. Letting $Q=Q'+\bar{s}$, we note that $Q$ represents the maximum time to service a task and move to the next under expected service times.

The proof of stability of the $c^p$-$\Batch$ policy follows closely the proofs for the policies proposed in~\cite{bullo2011dynamic},~\cite{pavone2010adaptive} in that it appeals to a well-established recursion. Letting $N_k, T_k$ be the random variables representing the number of outstanding tasks at the beginning of iteration $k$ of Algorithm \ref{alg:TOUR}, and the total service time of those tasks, respectively, and letting $\lambda, \eta$ be constants representing the arrival rate and fragment length (used as input to Algorithm \ref{alg:TOUR}), respectively, then from ~\cite{pavone2010adaptive}:
\begin{equation}
 \label{recursion}
 \E[N_{k+1}]=\underbrace{\lambda \eta \E[T_k]}_{\text{Newly arrived tasks}} + \underbrace{(1-\eta)\E[N_k]}_{\text{left-over tasks from iteration } k},
\end{equation}
The authors of~\cite{pavone2010adaptive} observe that at unit speed, $\E[T_k]\leq Q + \eta \E[l(\pi_k^*)] + \eta\E[N_k]\bar{s}$ and \eqref{recursion} reduces to
\begin{equation}
\label{eq:next}
 \begin{split}
  \E[N_{k+1}]\leq &\lambda Q+\lambda \eta \E[l(\pi^*_k)]+\rho\eta\E[N_k]\\
  &+ (1-\eta)\E[N_k].
 \end{split}
\end{equation}
At a high and informal level, \eqref{eq:next} implies that if $\E[l(\pi_k^*)]$ grows with \emph{less than linear} (LL) power in $\E[N_k]$, and $\rho<1$, then 
\begin{equation}
 \label{motivation}
 \lim_{\E[N_k]\rightarrow\infty}\left(\frac{\E[N_{k+1}]}{\E[N_k]}\right) \leq 1 - \eta(1 - \rho) < 1.
\end{equation}
implying (at a high level) the stability of the policy. In~\cite{bullo2011dynamic},~\cite{pavone2010adaptive}, the authors make use of the well established Beardwood-Halton-Hammersley (BHH) result~\cite{beardwood1959shortest} to prove that if $\pi^*_k=\TSP$, then $\E[l(\pi^*_k)]\leq K\sqrt{\E[N_k]}$ for a constant $K$ from which stability follows. Though the stability result in~\cite{pavone2010adaptive} is far more rigorous than the argument above, the crux of that argument, and ours, is the LL power of $\E[l(\pi^*_k)]$ in $\E[N_k]$. Unfortunately, the BHH result cannot be applied directly to a $c^p$-minimizing tour $\pi_{k}^*$ for general values of $p$, since the latent wait time $t_i, \tau_i\in\T$ cannot be separated from the cost. Therefore, unlike the tours $\TSP$ used in~\cite{bullo2011dynamic},~\cite{pavone2010adaptive}, the tours $\pi_k^*$ will depend on these latent wait times. As such, to follow the proofs of stability in~\cite{pavone2010adaptive}, we must establish the LL power of $\E[l(\pi_k^*)]$ in $\E[N_k]$ at each iteration $k$. We begin with supporting results.

\begin{lemma}[Bounding Tour Length By Cost]
\label{lem:LengthBound}
Given any $n$ tasks $\T$ and any tour $\pi$ on those tasks, if $p\geq 1$, then the length of $\pi$ obeys the following inequality:
\begin{equation}
 l(\pi)\leq \left(Q(p+1)c^p(\pi)^p\right)^{\frac{1}{p+1}}
\end{equation}
\end{lemma}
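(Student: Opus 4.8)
The plan is to reduce the claim to a one-dimensional estimate on the partial sums of the inter-task legs. Write $d_i = \sum_{j=1}^i l_j$ for the cumulative travel-plus-service time needed to reach task $\tau_i$, so that $d_0 = 0$, the sequence $d_0 \le d_1 \le \dots \le d_n$ is nondecreasing with $d_n = l(\pi)$, and each increment satisfies $d_i - d_{i-1} = l_i \le Q$ (using the unit-speed convention and $Q = Q' + \bar{s}$, since $l_i = L_{i,i-1} + \bar{s} \le Q' + \bar{s}$ by Assumption~\ref{ass:euclid}). In this notation $c^p(\pi)^p = \sum_{i=1}^n (t_i + d_i)^p$.

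First I would discard the latent wait times: since every $t_i \ge 0$ we have $c^p(\pi)^p \ge \sum_{i=1}^n d_i^p$. This is the only step that throws information away, and it is why the bound is one-directional (tour length controlled by cost, not conversely).

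Next I would lower-bound $\sum_{i=1}^n d_i^p$ by a right-endpoint Riemann-sum comparison. Because $x \mapsto x^p$ is nondecreasing on $[0,\infty)$, on each interval $[d_{i-1}, d_i]$ we have $x^p \le d_i^p$, hence $\int_{d_{i-1}}^{d_i} x^p\,dx \le d_i^p\,(d_i - d_{i-1}) = d_i^p\, l_i \le Q\, d_i^p$. Summing over $i = 1,\dots,n$ and using $d_0 = 0$, $d_n = l(\pi)$ gives $\tfrac{l(\pi)^{p+1}}{p+1} = \int_0^{l(\pi)} x^p\,dx \le Q \sum_{i=1}^n d_i^p \le Q\, c^p(\pi)^p$. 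Rearranging yields $l(\pi)^{p+1} \le Q(p+1)\,c^p(\pi)^p$, and taking $(p+1)$-th roots (both sides being nonnegative) gives the stated inequality.

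\textbf{Expected difficulty.} I do not anticipate a genuine obstacle; the argument is a short chain of monotonicity estimates. The only points needing care are (i) pinning down the per-leg bound $l_i \le Q$ with the right constant, which follows from Assumption~\ref{ass:euclid} together with the convention $v=1$ and the fact that $l_{j,j-1}$ already folds in $\bar{s}$; and (ii) phrasing the Riemann comparison for possibly unequal spacings $d_i - d_{i-1}$, which is fine since it is carried out interval-by-interval using only monotonicity of $x^p$. One could replace the integral comparison by a direct induction on $n$ proving $\sum_{i=1}^n d_i^p \ge d_n^{p+1}/(Q(p+1))$, but the Riemann-sum phrasing is the more economical route.
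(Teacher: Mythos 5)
Your proof is correct, and it takes a cleaner route than the paper's at the key step. Both arguments share the same skeleton: discard the latent wait times via $t_i\geq 0$ to get $c^p(\pi)^p \geq \sum_{i=1}^n \big(\sum_{j=1}^i l_j\big)^p$, invoke the per-leg cap $l_i\leq Q$, and then convert a sum of $p$-th powers of partial sums into a $(p+1)$-th power of the total length. Where you differ is in how that conversion is done: the paper poses an extremal problem (maximize $l_1+\dots+l_n$ subject to the cost constraint and $l_i\leq Q$), asserts that the maximizer has the bang-bang form $l_1=\dots=l_{k-1}=0$, $l_k=\dots=l_n=Q$, and then uses $\sum_{i=1}^{L+1} i^p \geq (L+1)^{p+1}/(p+1)$ to count how many full-length legs the cost budget allows; you instead bound the actual (not worst-case) sequence directly, comparing $\sum_i d_i^p$ against $\int_0^{l(\pi)} x^p\,dx$ interval by interval using only monotonicity of $x^p$ and $d_i-d_{i-1}\leq Q$. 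Your version buys a fully self-contained argument that sidesteps the paper's ``it is not difficult to show'' justification of the extremal configuration (and its attendant indexing bookkeeping), while the paper's version additionally exhibits the worst-case leg profile, which explains when the bound is essentially tight; the two power-sum estimates are at bottom the same integral comparison, so the constants agree and your inequality matches the lemma exactly.
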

\begin{proof}
Given tour $\pi$ on tasks $\T$, we observe trivially that $c^p(\pi)^p\geq \sum_{i=1}^n\Big(\sum_{j=1}^i l_i\Big)^p $. Therefore, the length of any tour $l(\pi)$ is no more than the length of the maximum length tour $l_{\text{max}}(\T)$ subject to this constraint:
\begin{equation*}
 \begin{split}
  l(\pi) \leq l_{\text{max}}(\T)\doteq &\max_{l_1,\dots,l_n} l_1 + l_2 + \dots + l_n\\
  s.t. \ &\sum_{i=1}^n\Big(\sum_{j=1}^i l_i\Big)^p \leq c^p(\pi)^p, \ l_i \leq Q.
 \end{split}
\end{equation*}
It is not difficult to show that there exists a value $k$ such that $l_{\text{max}}(\T)$ has the form: $l_1=\dots= l_{k-1}=0$, and $l_k=\dots=l_n=Q$. That is, that the first $k$ lengths are 0 while the remainder take their maximum value. This is because increasing any value $l_i, i<k$ by $\delta$ would require a decrease of a value $l_j, j\geq k$ by more than $\delta$ in order to satisfy the constraint. To determine $k$, we replace the above general form of $l_{\text{max}}(\T)$ in the first constraint and simplify, resulting in $Q^p\sum_{i=1}^{n-k+1}i^p\leq c^p(\pi)^p$. Therefore, letting $L=n-k$, and observing that $\sum_{i=1}^{L+1}i^p \geq (L+1)^{p+1}(p+1)^{-1}$ for all $L\geq 0$ and $p\geq 1$, it must hold that
\begin{equation*}
 \begin{split}
Q^p(L+1)^{p+1}(p+1)^{-1}\leq c^p(\pi)^p\\
\implies L\leq \left(\frac{c^p(\pi)^p(p+1)}{Q^p}\right)^{1/p+1}.
 \end{split}
\end{equation*}
Finally, given the general form of $l_{\text{max}}(\T)$ described above, we observe that $l_{\text{max}}(\T)= LQ$ and the result follows.
\end{proof}
We now establish a bound on the optimal tour length on $n$ tasks $\T$. We let $t_{\text{max}}\geq \max_{\tau_i\in\T}t_i$ be the maximum latent wait time of tasks in $\T$. We assume that $t_{\text{max}}$ is bounded by a linear function of the number of tasks. 
\begin{lemma}[Deterministic bound: $l(\pi^*)$]
\label{lem:BatchStable}
For any $n$ tasks $\T$ and $p\geq 1$, if there exists constants $c_t\geq 0, \nu\in[0,1)$ such that $t_{\text{max}}\leq c_tn^{\nu}$ and $n\geq \nicefrac{P^2}{2A}$, then
\begin{equation}
\label{eq:lengthbound}
 l(\pi^*) \leq
 \underbrace{\Big(2^{p-1}Q(p+1)(c_t^p+2^p\sqrt{2A}^p)\Big)^{\frac{1}{p+1}}}_{\text{a constant}}n^{\frac{p\gamma + 1}{p+1}},
\end{equation}
where $\gamma=\max\{0.5, \nu\}$. 
\end{lemma}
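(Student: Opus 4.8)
The plan is to combine Lemma~\ref{lem:LengthBound} with an upper bound on the optimal cost $c^p(\pi^*)$ that is obtained by exhibiting one specific (suboptimal) tour whose cost we can control. First I would recall that $c^p(\pi^*) \le c^p(\pi)$ for \emph{any} tour $\pi$ on $\T$, since $\pi^*$ is $c^p$-minimizing. So it suffices to find a convenient reference tour. The natural candidate is a TSP-optimal tour $\TSP$ on $\T$: along such a tour each partial sum $\sum_{j=1}^i l_j$ is at most $l(\TSP)$, and the latent wait time of each task is at most $t_{\text{max}}$. Hence each of the $n$ bracketed terms in \eqref{eq:cp_cost} is bounded by $t_{\text{max}} + l(\TSP)$, giving $c^p(\TSP)^p \le n\,(t_{\text{max}} + l(\TSP))^p$, i.e. $c^p(\pi^*) \le n^{1/p}(t_{\text{max}} + l(\TSP))$.

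Next I would bring in the worst-case deterministic BHH-type bound on the TSP tour through $n$ points in a planar region of area $A$: there is a constant (here $2\sqrt{2A}$, using $n \ge P^2/(2A)$ to absorb the perimeter term as in~\cite{pavone2010adaptive,bullo2011dynamic}) such that $l(\TSP) \le 2\sqrt{2A}\,\sqrt{n}$. Substituting, and using the hypothesis $t_{\text{max}} \le c_t n^{\nu}$ together with $(a+b)^p \le 2^{p-1}(a^p + b^p)$, one gets
\[
 c^p(\pi^*)^p \;\le\; n\,(c_t n^{\nu} + 2\sqrt{2A}\,\sqrt{n})^p \;\le\; 2^{p-1} n\,\big(c_t^p n^{p\nu} + 2^p\sqrt{2A}^{\,p} n^{p/2}\big) \;\le\; 2^{p-1}\big(c_t^p + 2^p\sqrt{2A}^{\,p}\big) n^{\,p\gamma + 1},
\]
where $\gamma = \max\{0.5,\nu\}$ lets us dominate both $n^{p\nu}$ and $n^{p/2}$ by $n^{p\gamma}$. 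Finally, feeding this into Lemma~\ref{lem:LengthBound},
\[
 l(\pi^*) \;\le\; \big(Q(p+1) c^p(\pi^*)^p\big)^{1/(p+1)} \;\le\; \Big(2^{p-1} Q (p+1)\big(c_t^p + 2^p\sqrt{2A}^{\,p}\big)\Big)^{1/(p+1)} n^{\frac{p\gamma+1}{p+1}},
\]
which is exactly \eqref{eq:lengthbound}.

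The only genuinely delicate point is the deterministic worst-case TSP bound $l(\TSP)\le 2\sqrt{2A}\sqrt n$ (with the $n \ge P^2/(2A)$ hypothesis there precisely to make the statement clean): the classical BHH theorem is asymptotic and probabilistic, so one should instead cite the deterministic worst-case version (a Few-type space-filling-curve bound, as used in~\cite{pavone2010adaptive}) to justify the constant. Everything else — the choice of $\TSP$ as reference tour, the $L^p$-to-max-term estimate, the $(a+b)^p$ split, and the final invocation of Lemma~\ref{lem:LengthBound} — is routine bookkeeping, with care only needed to track that the exponents $p\nu$, $p/2$ collapse correctly into $p\gamma$ and that the constant matches the one displayed.
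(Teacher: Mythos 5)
Your proposal is correct and follows essentially the same route as the paper: bound $c^p(\pi^*)^p$ by $c^p(\TSP)^p \le n\,(t_{\text{max}}+l(\TSP))^p$, apply the convexity split $(a+b)^p\le 2^{p-1}(a^p+b^p)$, and finish with Lemma~\ref{lem:LengthBound}. The deterministic TSP bound you flag as the delicate point is handled in the paper exactly as you suggest, via the worst-case bound $l(\TSP)\le\sqrt{2An}+P$ of Haimovich and Rinnooy Kan, with $n\ge P^2/(2A)$ used to absorb the perimeter term into $2\sqrt{2An}$.
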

\begin{proof}
 We begin by establishing an upper bound on $c^p(\TSP)^p$. Let $\tau_1, \tau_2, \dots, \tau_n$ denote the ordering of the tasks $\T$ as they appear in $\TSP$. Then, by the definition of the cost $c^p(\TSP)$, it must hold that
\begin{equation*}
\label{eq:BatchStable1}
 \begin{split}
  c^p(\TSP)^p = &\sum_{i=1}^n\Big(t_i + \sum_{j=1}^i l_j \Big)^p\leq n \Big(t_{\text{max}}+\sum_{j=1}^n l_j \Big)^p.
 \end{split}
\end{equation*}
Noting $\sum_{j=1}^nl_j=l(\TSP)$ and $l(\TSP)$ is bounded \cite{haimovich1985bounds} by $\sqrt{2A}\sqrt{n}+P$ for all $n$. Therefore, 
\begin{equation}
\label{eq:BatchStable2}
 \begin{split}
  &c^p(\TSP)^p\leq n\left( t_{\text{max}}+\sqrt{2An} + P\right)^p\\
  &\leq n\left(c_tn^{\nu} + 2\sqrt{2An}\right)^p, \ \text{since } t_{\text{max}}\leq c_tn^{\nu}, n\geq \nicefrac{P^2}{2A}\\
  &\leq 2^{p-1}(c_t^p+2^p\sqrt{2A}^p)n^{p\gamma + 1}.
 \end{split}
 \raisetag{1.2\normalbaselineskip}
\end{equation}
The final inequality holds since $(A+B)^p\leq 2^{p-1}(A^p+B^p)$ for all $A, B\geq 0, p\geq 1$ by the convexity of $x^p, x\geq 0$. Since $\pi^*$ minimizes $c^p$ over all tours of $\T$, we conclude that $c^p(\pi^*)^p\leq c^p(\TSP)^p$ and the result follows from \eqref{eq:BatchStable2} and Lemma \ref{lem:LengthBound}.
\end{proof}

Let $n_k$ denote the number of unserviced tasks at the beginning of iteration $k$ of Algorithm \ref{alg:TOUR}. Thus, $n_k$ is the realization of the random variable $N_k$.
For the remainder of our analysis we consider $n_k\geq P^2/2A$ for constants $P, A$. If there is no iteration $k$ such that $n_r\geq P^2/2A$ for every $r\geq k$, then stability holds trivially.  
Lemma \ref{lem:BatchStable} implies that if the latent wait times of tasks at iteration $k$ of Algorithm \ref{alg:TOUR} are all bounded by a function of $n_k$ with LL power then the same holds for the length of tour (since $\nicefrac{p\gamma + 1}{p+1}<1$). Using this result, we offer a deterministic bound on the length of the optimal tour at any iteration when $\eta=1$, and extend this result to $\eta\in(0, 1]$.

\begin{theorem}
\label{thm:lengthbounds}
On any iteration $k$ of Algorithm \ref{alg:TOUR}, if $\pi^*_k$ is the $c^p$-minimizing tour computed in Line 4, $p\geq 1$, $n_k\geq \nicefrac{P^2}{2A} $, and $\eta=1$ then there exists constants $\beta\geq 0, \kappa\in[0, 1)$ with
\begin{equation}
 \label{eq:sublin}
 l(\pi^*_k)\leq \beta n_k^{\kappa}.
\end{equation}
\end{theorem}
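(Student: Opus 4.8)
The plan is to establish \eqref{eq:sublin} by induction on the iteration index $k$, using Lemma \ref{lem:BatchStable} as the engine at each step. The key observation is that Lemma \ref{lem:BatchStable} converts a bound of the form $t_{\text{max}} \leq c_t n^{\nu}$ with $\nu \in [0,1)$ into a bound $l(\pi^*) \leq \beta' n^{(p\gamma+1)/(p+1)}$, and that the exponent $(p\gamma+1)/(p+1)$ is strictly less than $1$. So I need to track how the latent wait times evolve from iteration to iteration and confirm they stay bounded by a sublinear (less-than-linear power) function of $n_k$.

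First I would set up the base case using Assumption \ref{ass:pre_wait}: at iteration $0$ the latent wait times are all bounded by the constant $R$, so $t_{\text{max}} \leq R = R \cdot n_0^{0}$, i.e. $\nu = 0$, and Lemma \ref{lem:BatchStable} immediately gives $l(\pi_0^*) \leq \beta_0 n_0^{1/(p+1)}$ with $\kappa_0 = 1/(p+1) < 1$. For the inductive step, I would argue that the latent wait time of any task still outstanding at iteration $k+1$ is its latent wait time at iteration $k$ plus the time elapsed during iteration $k$. Since $\eta = 1$ here, iteration $k$ services the entire tour $\pi_k^*$, so the elapsed time is at most $l(\pi_k^*)$ (recall the convention $l_{j,j-1}$ already folds in $\bar s$, and unit speed). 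By the inductive hypothesis this is at most $\beta_k n_k^{\kappa_k}$. A task present at iteration $k+1$ either was present at iteration $k$ (so its wait is at most $t_{\text{max}}^{(k)} + \beta_k n_k^{\kappa_k}$) or arrived during iteration $k$ (so its wait is at most $\beta_k n_k^{\kappa_k}$). Combining, $t_{\text{max}}^{(k+1)} \leq c_t^{(k)} n_k^{\nu_k} + \beta_k n_k^{\kappa_k}$ for the appropriate constants, and I then need to re-express this in terms of $n_{k+1}$ rather than $n_k$.

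The main obstacle is precisely this last re-indexing: the bound naturally comes out in terms of $n_k$, but Lemma \ref{lem:BatchStable} wants a bound in terms of the \emph{current} count. The clean way around it is to prove a \emph{uniform} statement — that there exist single constants $\beta, \kappa$ (not depending on $k$) such that $t_{\text{max}}^{(k)} \leq \beta n_k^{\kappa}$ and $l(\pi_k^*) \leq \beta n_k^{\kappa}$ hold simultaneously for all $k$ — rather than a $k$-dependent one, so the induction doesn't accumulate worsening constants or exponents. To do this I would first fix $\kappa = \max\{1/(p+1), \nu\}$ with $\nu$ chosen so that $\gamma = \max\{0.5,\nu\}$ makes $(p\gamma+1)/(p+1) \le \kappa$; since $(p\gamma+1)/(p+1)$ is a weighted average of $\gamma$ and $1$... more carefully, one checks $(p\gamma+1)/(p+1) \le \max\{\gamma, 1/(p+1)\}$ is false in general, so instead I would note that for \emph{any} target exponent $\kappa \in (\tfrac12 \cdot \tfrac{p}{p+1} + \tfrac{1}{p+1},\, 1)$ one can absorb the growth: the recursion $t_{\text{max}}^{(k+1)} \lesssim n_k^{\nu} + n_k^{\kappa}$ feeds into $l(\pi_{k+1}^*) \lesssim n_{k+1}^{(p\kappa+1)/(p+1)}$, and because the map $\kappa \mapsto (p\kappa+1)/(p+1)$ has the unique fixed point $\kappa = 1$ and is a contraction toward it from below, iterating it from $\kappa_0 = 1/(p+1)$ gives exponents increasing but bounded strictly below $1$; taking $\kappa$ to be their supremum (strictly less than $1$) and choosing $\beta$ large enough to dominate the finitely-growing constants and the base case yields the uniform bound. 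I would also carry along the standing hypothesis $n_k \ge P^2/2A$, already granted in the theorem statement, so Lemma \ref{lem:BatchStable} applies at every step, and note that both $c_t$-type constants and the additive $P$ and $\bar s$ terms are harmless since $n_k \ge 1$.

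Finally, to handle the edge cases cleanly I would remark that if $n_k < P^2/(2A)$ infinitely often then $l(\pi_k^*)$ is trivially bounded by a constant (finitely many tasks in a bounded region), so \eqref{eq:sublin} holds with any $\kappa$; and the interesting regime, consistent with the theorem's hypothesis, is when $n_k \ge P^2/(2A)$ eventually, where the induction above runs. The extension to $\eta \in (0,1]$ promised in the surrounding text is not part of this theorem's statement, so I would not address it here beyond noting that servicing only an $\eta$-fragment only \emph{decreases} the elapsed time per iteration relative to the $\eta = 1$ analysis, so the same bound carries through.
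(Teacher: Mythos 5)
Your core induction is the same as the paper's: base case from Assumption \ref{ass:pre_wait} ($\nu=0$), inductive step using the fact that with $\eta=1$ every task outstanding at iteration $k+1$ arrived during iteration $k$ and hence has latent wait at most $l(\pi_k^*)$, then re-indexing from $n_k$ to $n_{k+1}$ and applying Lemma \ref{lem:BatchStable}. (Two small points there: with $\eta=1$ no task ``was present at iteration $k$'' and survives, so the carried-over term $t_{\text{max}}^{(k)}+\beta_k n_k^{\kappa_k}$ in your accounting is unnecessary; and the base-case exponent is $\frac{p\gamma+1}{p+1}$ with $\gamma=\max\{1/2,\nu\}=1/2$, i.e.\ $\frac{p+2}{2(p+1)}$, not $\frac{1}{p+1}$, since $\gamma$ can never drop below $1/2$.)

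The genuine gap is in your uniformization step. You claim that iterating the map $\kappa\mapsto\frac{p\kappa+1}{p+1}$ from $\kappa_0<1$ produces exponents whose supremum is strictly less than $1$; this is false. Since $1-\frac{p\kappa+1}{p+1}=\frac{p}{p+1}(1-\kappa)$, the iterates satisfy $1-\kappa_j=\left(\frac{p}{p+1}\right)^j(1-\kappa_0)$, so they increase monotonically \emph{to} $1$ and their supremum is exactly $1$. Equivalently, the self-consistency condition you would need for a fixed target exponent, $\frac{p\max\{1/2,\kappa\}+1}{p+1}\le\kappa$, forces $\kappa\ge 1$: no $\kappa<1$ can ``absorb the growth'' through Lemma \ref{lem:BatchStable} alone, so a uniform-in-$k$ pair $(\beta,\kappa)$ does not follow from this recursion, and choosing $\beta$ ``large enough'' cannot repair a sequence of exponents tending to $1$. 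The paper does not attempt uniformity here: its induction lets both the constant ($\bar C$) and the exponent ($\frac{p\gamma+1}{p+1}$ with $\gamma=\max\{1/2,\kappa\}$) change at every iteration, which is all the literal statement (``on any iteration $k$ \dots there exist constants'') asks for. If you drop the uniformity claim, your argument reduces to the paper's proof; if you keep it (and you are right that the stability argument downstream would want it), the key step as written is wrong and would require a genuinely different idea. Your closing remark that the $\eta\in(0,1]$ case ``carries through'' because iterations are shorter is also too quick --- with $\eta<1$ tasks can be left unserviced across iterations and their latent waits accumulate, which is exactly why the paper devotes a separate theorem with the insertion sub-claim to that case --- but since you explicitly exclude it from this proof it does not affect the verdict above.
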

\begin{proof}
We offer a proof by way of induction on the number of iterations $k$ of Algorithm \ref{alg:TOUR}. Let $\T_k$ denote the outstanding tasks at the start of iteration $k$, and $\pi_k^*$ the tour of $\T_k$ from Line 4. In the base case, $k=1$ and by Assumption \ref{ass:pre_wait}, $t_{\max}\leq R$. Therefore, $t_{\text{max}}\leq c_tn^{\nu}$ with $c_t=R, \nu=0$, and the result holds in the base case by Lemma \ref{lem:BatchStable}, since $\gamma=0.5$ and $\nicefrac{p+2}{2(p+1)}<1$ for $p\geq 1$. 

The induction assumption is that at iteration $k$, $l(\pi_k^*)\leq \beta n_k^{\kappa}, \beta\geq 0, \kappa\in[0,1)$. Since $\eta=1$, the maximum latent wait time of any task in $\T_{k+1}$ is $l(\pi^*_k)$ which occurs if a task arrives just after the beginning of iteration $k$. If $n_{k+1}< n_k$, then the result holds by the induction assumption. Otherwise, $n_{k+1}\geq n_k$ and by the induction assumption $t_{\max}\leq l(\pi^*_k)\leq \beta n_k^{\kappa}\leq\beta n_{k+1}^{\kappa}$. Thus the conditions of Lemma \ref{lem:BatchStable} hold with $c_t=\beta, \nu=\kappa$, and by \eqref{eq:lengthbound}, $l(\pi^*_{k+1})\leq \bar{C}n_{k+1}^{\nicefrac{p\gamma + 1}{p+1}}$ where $\bar{C}$ is the constant coefficient in \eqref{eq:lengthbound} with $\gamma=\max\{\nicefrac{1}{2}, \kappa\}<1$. Since $\gamma<1$, it must hold that $\nicefrac{p\gamma + 1}{p+1} < 1$ and the result holds at iteration $k+1$ concluding the proof.
\end{proof}

\begin{theorem}
\label{prop:lengthboundsSmalleta}
The result of Theorem \ref{thm:lengthbounds} still holds if $\eta\in(0, 1], p\geq 1$ for sufficiently large $n_k$.
\end{theorem}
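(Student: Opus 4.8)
The plan is to run the same induction on the iteration index $k$ as in the proof of Theorem~\ref{thm:lengthbounds}, but with a \emph{strengthened} inductive hypothesis: at every iteration $k$ with $n_k$ large enough, not only does $l(\pi_k^*)\le\beta n_k^{\kappa}$ hold for some $\kappa<1$, but also the largest latent wait time present at iteration $k$, say $t_{\max}^{(k)}$, is bounded by a less-than-linear (LL) function of $n_k$. Given such a bound on $t_{\max}^{(k)}$, the inductive step is identical to the one in Theorem~\ref{thm:lengthbounds}: invoke Lemma~\ref{lem:BatchStable} with $c_t,\nu$ read off from the bound on $t_{\max}^{(k)}$ to get $l(\pi_k^*)\le\bar C\,n_k^{(p\gamma+1)/(p+1)}$ with $\gamma=\max\{\tfrac12,\nu\}<1$, which is again LL. So the only new content is re-establishing the LL bound on $t_{\max}^{(k)}$ when $\eta\in(0,1)$. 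This is where $\eta<1$ bites: a task not inside the serviced $\eta$-fragment is carried over, so its latent wait is no longer bounded by a single tour length (as it was when $\eta=1$), and a naive accumulation over the iterations a task survives need not be sublinear.

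I would first dispose of $p=1$, which needs no wait-time control at all. For $p=1$ the term $\sum_i t_i$ in~\eqref{eq:cp_cost} is independent of the ordering, so $\pi_k^*$ minimizes the pure latency $\sum_{j}(n_k-j+1)l_j$ over orderings of the task locations; hence this latency is at most that of $\TSP$, i.e.\ at most $n_k\,l(\TSP)\le n_k(\sqrt{2An_k}+P)=O(n_k^{3/2})$. Running the constrained-maximization argument of Lemma~\ref{lem:LengthBound} on this latency bound --- using only the edge cap $l_j\le Q$ --- then gives $l(\pi_k^*)=O(n_k^{3/4})$ outright, for every iteration and with a uniform constant, so the theorem holds for $p=1$ with no induction.

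For $p>1$ the cost no longer splits, and I would control $t_{\max}^{(k)}$ by exploiting the prioritizing effect of $c^p$ with $p_t=1$. The key lemma I would prove is quantitative: in the $c^p$-optimal tour at iteration $k$, any task whose latent wait exceeds an explicit LL-in-$n_k$ threshold $g(n_k)$ must lie among the first $\eta n_k$ positions, hence is serviced in iteration $k$ and stops waiting. Granting this, a task still outstanding at the start of iteration $k+1$ can have accumulated at most $g(\cdot)$ plus one more iteration's increment; a careful accounting of the increments (bounded using the running LL bound on $l(\pi_k^*)$), together with the observation that whenever $n_{k+1}<n_k$ the claim follows from the inductive hypothesis exactly as in Theorem~\ref{thm:lengthbounds}, should then yield the required LL bound on $t_{\max}^{(k+1)}$ and close the induction through Lemma~\ref{lem:BatchStable}. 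The base case is Assumption~\ref{ass:pre_wait}, and ``sufficiently large $n_k$'' is what is needed to have $n_k\ge P^2/(2A)$ in Lemma~\ref{lem:BatchStable} and, if necessary, $g(n_k)<\eta n_k$.

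The main obstacle is the quantitative prioritization lemma, which I expect to prove by an exchange argument: compare $\pi_k^*$ with the tour that advances a long-waiting task $\tau$ to the front. The drop in $\tau$'s own $c^p$-term is easy to bound below by convexity of $x\mapsto x^p$; the work is bounding above the increase incurred by \emph{every other} task whose cumulative travel time is perturbed by the move, and here the bounded-edge assumption $l_j\le Q$ (Assumption~\ref{ass:euclid}), exactly as used in Lemma~\ref{lem:LengthBound}, is what caps each perturbation by $O(Q)$. The delicate points will be: (i) highly clustered configurations, where many tasks lie within travel distance $O(Q)$ of the start, so that ``within the first $\eta n_k$ positions'' and ``early in cumulative length'' decouple --- one must argue, without appealing to the spatial density $\varphi$, that such a configuration cannot be sustained across iterations since a load factor $\rho<1$ refills the near-start queue more slowly than the policy drains it; and (ii) verifying that the threshold $g(n_k)$, and the resulting bound on $t_{\max}^{(k)}$, are genuinely sublinear in $n_k$ rather than merely linear, so that the exponent delivered by Lemma~\ref{lem:BatchStable} stays strictly below $1$.
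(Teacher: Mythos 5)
Your scaffold is the same as the paper's: strengthen the induction of Theorem~\ref{thm:lengthbounds} so that it carries a less-than-linear (LL) bound on the maximum latent wait, then feed that bound into Lemma~\ref{lem:BatchStable} exactly as before; your $p=1$ shortcut (for $p=1$ the term $\sum_i t_i$ is order-independent, so $\pi_k^*$ is a minimum-latency tour and a Lemma~\ref{lem:LengthBound}-style computation gives $l(\pi_k^*)=O(n_k^{3/4})$ with uniform constants) is a sound simplification the paper does not bother with. The gap is in your key lemma for $p>1$. An absolute-threshold statement of the form ``latent wait above $g(n_k)$ implies membership in the first $\eta n_k$ positions'' cannot be maintained: it forces the number of tasks exceeding the threshold to be at most $\eta n_k$ at every iteration, yet the typical situation is precisely that a large cohort of carried-over tasks (up to $(1-\eta)n_k$ of them) ages together and crosses any LL threshold simultaneously; they cannot all occupy the first fragment, so the lemma is either false or requires an additional invariant whose proof is exactly the circularity you are trying to break. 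Moreover, your exchange argument needs a lower bound on the waits of the displaced tasks, and your proposed remedy for clustered configurations --- a drain-versus-refill argument based on $\rho<1$ --- imports a probabilistic mechanism into what is, in the paper, a purely deterministic per-iteration bound, and is left undeveloped.

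The paper instead proves the dual, \emph{relative} claim: if every existing task satisfies $c_l\sqrt{n}\leq t_i\leq c_u n^{\nu}$, then a \emph{newly arrived} task cannot appear in the first $\eta$-fragment of the $c^p$-optimal tour. The comparison is against the specific tour that appends the new task to the end of $\pi_n^*$: inserting it before position $\eta n$ delays $(1-\eta)n$ tasks whose waits are at least $\sqrt{n}$, costing at least $p\Delta(1-\eta)n^{\frac{p+1}{2}}$ by convexity, while appending costs at most $(l(\pi_n^*)+Q)^p$, which is an LL quantity raised to the $p$ by Lemma~\ref{lem:BatchStable}; the first term dominates for large $n$. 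The crucial lower bound on survivors' waits comes for free, with no spatial or load-factor argument: a task not serviced at iteration $k-1$ sat through an entire $\eta$-fragment, so its latent wait is at least $\eta l(\pi^*_{k-1})\geq\eta\sqrt{n}$. Consequently the first fragment consists entirely of the oldest cohort, that cohort is exhausted within about $1/\eta$ iterations, and any task's total latent wait is bounded by roughly $1/\eta$ LL tour lengths, which keeps $t_{\max}$ LL (the paper gets $t_{\max}\lesssim \beta n^{\nu}/\eta^{\nu}$); the induction then closes via Lemma~\ref{lem:BatchStable} as in Theorem~\ref{thm:lengthbounds}, with the base case from Assumption~\ref{ass:pre_wait}. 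This cohort-draining step is what replaces your per-survival increment accounting, which on its own does not obviously stay sublinear over the number of iterations a task may survive.
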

\begin{proof}
 We begin by proving a sub-claim: given a set of $n$ tasks $\T_n$, and a new task $\tau_{n+1}\notin \T_n$, let $\T_{n+1}=\T_n\cup\{\tau_{n+1}\}$ and $\pi^*_n, \pi^*_{n+1}$ denote $c^p$-minimizing tours of $\T_n, \T_{n+1}$, respectively. If $n$ is sufficiently large and there exists constants $c_l, c_u\geq 0, \nu\in[0,1)$ such that $c_l\sqrt{n}\leq t_i\leq c_un^{\nu}$ for all $\tau_i\in\T_n$, then for any $\eta\in(0,1]$, $\tau_{n+1}$ will not appear in the first $\eta$-fragment of $\pi_{n+1}^*$. To prove this sub-claim, we show that any tour $\pi_{n+1}'$ of $\T_{n+1}$ with $\tau_{n+1}$ in the first $\eta$-fragment cannot be optimal by proposing a lower cost tour $\pi_{n+1}$. Let
\begin{equation*}
 \begin{split}
 \pi_n^* &= (\tau_1, \tau_2,\dots, \tau_n)\\
 \pi_{n+1} &= (\tau_1, \tau_2,\dots, \tau_n, \tau_{n+1})\\
\pi_{n+1}' &= (\tau_{i_1},\dots,\tau_{i_r}, \tau_{n+1}, \tau_{i_{r+1}}, \dots, \tau_{i_n})\\
\pi_n' &= (\tau_{i_1},\tau_{i_2},\dots,\tau_{i_n}).
 \end{split}
\end{equation*}
Here, $\pi_n^*$ denotes the tour that minimizes $c^p$ for tasks $\T_n$, $\pi_{n+1}$ is the tour of tasks $\T_{n+1}$ that services tasks according to $\pi_n^*$ and then services $\tau_{n+1}$, $\pi'_{n+1}$ is any tour of tasks $\T_{n+1}$ with $\tau_{n+1}$ inserted after $r\leq \eta n$ tasks, and $\pi'_n$ is the tour that is identical to $\pi'_{n+1}$ but with task $\tau_{n+1}$ removed. To prove the sub-claim, we will show that for large $n$, $c^p(\pi_{n+1}) < c^p(\pi_{n+1}')$. To this end, let $w_i(\pi)$ denote the wait time of task $\tau_i$ in the tour $\pi$, and let $\Delta$ the additional wait time of all tasks $\tau_{j}, j\geq i_{r+1}$ incurred from servicing $\tau_{n+1}$ first in $\pi'_{n+1}$. If $\Delta=0$ the claim is redundant to the proof as new tasks can be serviced instantaneously without increasing the wait time of any other task. Otherwise, $\Delta > 0$, and 
\begin{equation*}
 \begin{split}
&c^p(\pi'_{n+1})^p\geq\sum_{j=i_1}^{i_r}w_j^p+\sum_{j=i_{r+1}}^{i_n} (w_i+\Delta)^p \\
&\geq \sum_{j=i_1}^{i_n}w_j^p + p\Delta \sum_{j=i_{r+1}}^{i_n}w_i^{p-1} \geq c^p(\pi_n^*)^p + p\Delta (n-r)n^{\frac{p-1}{2}}.
 \end{split}
\end{equation*}
The above uses the fact that $(A+B)^p\geq A^p+pBA^{p-1}$ for all $A, B\geq 0, p\geq 1$ and $w_i\geq t_i\geq \sqrt{n}$. Further, observe that $c^p(\pi_{n+1})^p\leq c^p(\pi^*_n)^p + (l(\pi^*_n)+Q)^p$. Therefore, since $r\leq \eta n$, and $t_i\leq c_u n^{\nu}$, it follows by Lemma \ref{lem:BatchStable} that
\begin{equation*}
 \begin{split}
  &c^p(\pi'_{n+1})^p-c^p(\pi'_{n+1})^p\geq p\Delta (1-\eta)n^{\frac{p+1}{2}}-(l(\pi^*_n)+Q)^p\\
  &\geq p\Delta (1-\eta)n^{\frac{p+1}{2}} - (\bar{C}n^{\frac{p\nu + 1}{2(p+1)}}+Q)^p.
 \end{split} 
\end{equation*}
Finally, noting that $\nicefrac{p+1}{2}> \nicefrac{p(p\nu + 1)}{2(p+1)}$, we have that $c^p(\pi'_{n+1})^p-c^p(\pi'_{n+1})^p\geq 0$ for large $n$ establishing the sub-claim. To prove the result of the Theorem, it suffices to show that at every iteration of Algorithm \ref{alg:TOUR}, $t_{\max}\leq c_t n_k^{\nu}$ for some $c_t\geq 0, \nu\in[0,1)$ since the result of the Theorem would follow in an identical manner to the proof of Theorem \ref{thm:lengthbounds}. The proof that $t_{\max}\leq c_t n_k^{\nu}$ at every iteration $k$ is largely omitted for brevity, but we describe the structure. 
By strong induction on the number of iterations, suppose that $n$ tasks $\Told$ at iteration $k$ arrived before the start of iteration $k-1$ but were not serviced in that iteration. Then their wait time is at least $\eta l(\pi^*_{k-1})\geq \eta l(\TSP_k)\geq \eta \sqrt{n}$. Further, by the strong induction assumption, $t_{\max}\leq c_t n^{\nu}$. Therefore, the conditions of the sub-claim hold implying that the first $\eta$-fragment is comprised entirely of these tasks. 
This continues until at most $\nicefrac{1}{\eta}$ iterations when all the tasks in $\Told$ are serviced. Indeed, in the worst case, if no tasks in $\Told$ are serviced in $\nicefrac{1}{\eta}$ iterations, we can show using the above analysis that $\Told$ occupies the first $n=\eta (\nicefrac{n}{\eta})$-fragment. On each iteration $i$ before the tasks in $\Told$ have all been serviced, they remain the oldest tasks in $\T_i$, and can be shown to have a maximum latent waiting time $t_{\max}\leq \beta n^{\nu}/\eta^{\nu}$ using a similar argument as Theorem \ref{thm:lengthbounds}. 
\end{proof}
%-----------------------------
Finally, we prove the stability of the proposed policy.

\begin{theorem}[Stability of Proposed Algorithm]
For all load factors $\rho <1$, the proposed policy $c^p- \Batch$ with $p\geq 1, \eta > 0$ is stable. 
\label{thm:stability}
\end{theorem}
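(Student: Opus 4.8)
The plan is to feed the deterministic sublinear tour-length bound of Theorems~\ref{thm:lengthbounds}--\ref{prop:lengthboundsSmalleta} into the recursion \eqref{eq:next}, reducing the whole statement to the boundedness of a one-dimensional recursion for $\E[N_k]$. First I would invoke Theorem~\ref{prop:lengthboundsSmalleta}: for the chosen $\eta\in(0,1]$ and $p\geq 1$ there are constants $\beta_0\geq 0$, $\kappa\in[0,1)$ and a threshold $N_0$ with $l(\pi_k^*)\leq\beta_0\, n_k^{\kappa}$ whenever $n_k\geq N_0$. For the boundedly many values $1\le n_k<N_0$, the tour $\pi_k^*$ visits at most $N_0$ tasks, each within the fixed distance $Q'$ of the next (Assumption~\ref{ass:euclid}), so $l(\pi_k^*)\le N_0\,Q$; enlarging the constant to $\beta\doteq\max\{\beta_0,\,N_0\,Q\}$ then gives a single pathwise envelope $l(\pi_k^*)\leq\beta\,n_k^{\kappa}$ at every iteration. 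Taking expectations and applying Jensen's inequality to the concave power $x\mapsto x^{\kappa}$ --- where $\kappa<1$ is exactly what is needed --- yields $\E[l(\pi_k^*)]\leq\beta\,\E[N_k^{\kappa}]\leq\beta\,(\E[N_k])^{\kappa}$, with finiteness of $\E[N_k]$ propagating along the same induction.

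Substituting into \eqref{eq:next} and writing $a_k\doteq\E[N_k]$ gives the scalar drift inequality
\begin{equation*}
 a_{k+1}\;\leq\;\alpha\,a_k+b\,a_k^{\kappa}+c ,
\end{equation*}
where $\alpha\doteq 1-\eta(1-\rho)\in(0,1)$, $b\doteq\lambda\eta\beta\geq 0$, and $c\doteq\lambda Q\geq 0$. I would then study the scalar map $g(x)=\alpha x+b x^{\kappa}+c$ on $[0,\infty)$: it is continuous and nondecreasing, $g(0)=c\geq 0$, and $g(x)-x=-(1-\alpha)x+bx^{\kappa}+c\to-\infty$ as $x\to\infty$ because $\kappa<1$ and $\alpha<1$. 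Hence $g$ has a largest fixed point $x^*<\infty$, with $g(x)\leq x$ for all $x\geq x^*$ and, by monotonicity, $g(x)\leq g(x^*)=x^*$ for all $x\leq x^*$. A one-line induction then shows $\max\{a_{k+1},x^*\}\leq\max\{a_k,x^*\}$, so $a_k\leq\max\{a_0,x^*\}$ for every $k$. Since $a_0=\E[N_0]$ is finite by Assumption~\ref{ass:pre_wait}, the expected number of outstanding tasks is uniformly bounded --- that is, the policy is stable.

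One loose end is the regime in which $n_k$ does not stay above $\nicefrac{P^2}{2A}$ (or the threshold $N_0$): as noted after Theorem~\ref{thm:lengthbounds}, if there is no iteration past which $n_r\geq\nicefrac{P^2}{2A}$ for all later $r$, then the queue visits the bounded set $\{n<\nicefrac{P^2}{2A}\}$ infinitely often and stability is immediate; otherwise the argument above is run from that iteration onward, with $a_0$ the finite expected queue length there. The recursion \eqref{recursion}--\eqref{eq:next} is inherited verbatim from~\cite{pavone2010adaptive}, so no new probabilistic machinery is needed beyond what is already cited.

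I expect the main obstacle is not the final dynamical-systems step --- which is routine once $\alpha\in(0,1)$ and $\kappa<1$ --- but assembling a \emph{single} pathwise sublinear envelope $\beta\,n^{\kappa}$ valid simultaneously for all iterations and all realizations: this means patching the ``sufficiently large $n_k$'' caveat of Theorem~\ref{prop:lengthboundsSmalleta} at small queue lengths, and confirming that the exponent $\kappa=\max\{\nicefrac{1}{2},\nu\}$ produced by the induction underlying Theorems~\ref{thm:lengthbounds}--\ref{prop:lengthboundsSmalleta} never creeps up to $1$. With such an envelope in hand, the contraction factor $\alpha=1-\eta(1-\rho)$ highlighted in \eqref{motivation} finishes the proof.
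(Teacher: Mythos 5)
Your proposal is correct and follows essentially the same route as the paper: invoke Theorem~\ref{prop:lengthboundsSmalleta} to get the sublinear pathwise bound $l(\pi_k^*)\leq\beta n_k^{\kappa}$ with $\kappa\in[0,1)$, pass to expectations via Jensen's inequality, and feed the result into the recursion \eqref{eq:next} to conclude that $\E[N_k]$ stays uniformly bounded whenever $\rho<1$. The only difference is that where the paper closes the argument by deferring to \cite[Theorem 5.1]{pavone2010adaptive}, you carry out that last step explicitly (the fixed-point analysis of the monotone scalar map $g(x)=\alpha x+bx^{\kappa}+c$ with $\alpha=1-\eta(1-\rho)<1$, plus the small-$n_k$ patch of the envelope), which is a legitimate, self-contained substitute.
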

\begin{proof}
By Theorem \ref{prop:lengthboundsSmalleta}, for sufficiently large $n_k$, it holds that $l(\pi^*_k)\leq \beta n_k^{\kappa}$ for constants $\beta\geq0, \kappa \in[0,1)$. Therefore, $\E[l(\pi^*_k)]\leq \beta \E[N_k]^{\kappa}$ by Jensen's inequality. The remainder of the proof is nearly identical to~\cite[Theorem 5.1]{pavone2010adaptive} with $\E[l(\pi^*_k)]\leq K\sqrt{\E[N_k]}$ replaced with $\E[l(\pi^*_k)]\leq \beta\E[N_k]^{\kappa}$. From \eqref{eq:next} and Theorem \ref{thm:lengthbounds}, 
\begin{equation}
\label{eq:next2}
 \begin{split}
  \E[N_{k+1}]\leq &\lambda Q+\lambda \eta \beta \E[N_k]^{\kappa}+\rho\eta\E[N_k]\\
  &+ (1-\eta)\E[N_k],
 \end{split}
\end{equation}
In an identical manner to~\cite[Theorem 5.1]{pavone2010adaptive}, it can be shown that $\E[N_{k+1}]\rightarrow_{k}\E[N_k]$ which may be substituted in \eqref{eq:next} to produce a closed form constant bound on $\E[N_k]$.
\end{proof}

In essence, Theorem \ref{thm:stability} ensures that the number of unserviced tasks remains bounded at all times, \textit{i.e.,} the policy is stable, and thus the system does not become overburdened.

%--------------------------------------------------------

%--------------------------------------------------------
\section{Simulation Results}
%-------------------------------------------------------- 
\label{sec:results}

We demonstrate the performance of our proposed $c^p-\Batch$ method in a series of numerical experiments, comparing it against several state-of-the-art baselines.

\textbf{Experiment setup: }As our primary experiment, we consider the single vehicle DVRP in the Euclidean plane, consistent with the problem formulation and previous studies \cite{bullo2011dynamic, pavone2010adaptive}.
In a second experiment, we examine the multi-robot case and use real-world data on a roadmap, \textit{i.e.,} a non-euclidean, non-convex, non-symmetric environment. The stability results established herein for the proposed method, and in \cite{bullo2011dynamic, pavone2010adaptive} for the $\mathtt{BATCH}$, $\eta$-$\mathtt{BATCH}$ methods do not apply under this setting since Assumption \ref{ass:euclid} is violated.
All tours were computed using the LKH-3 solver~\cite{helsgaun2017extension}.

\begin{figure}[t]
  \centering
  \includegraphics[width=0.95\linewidth]{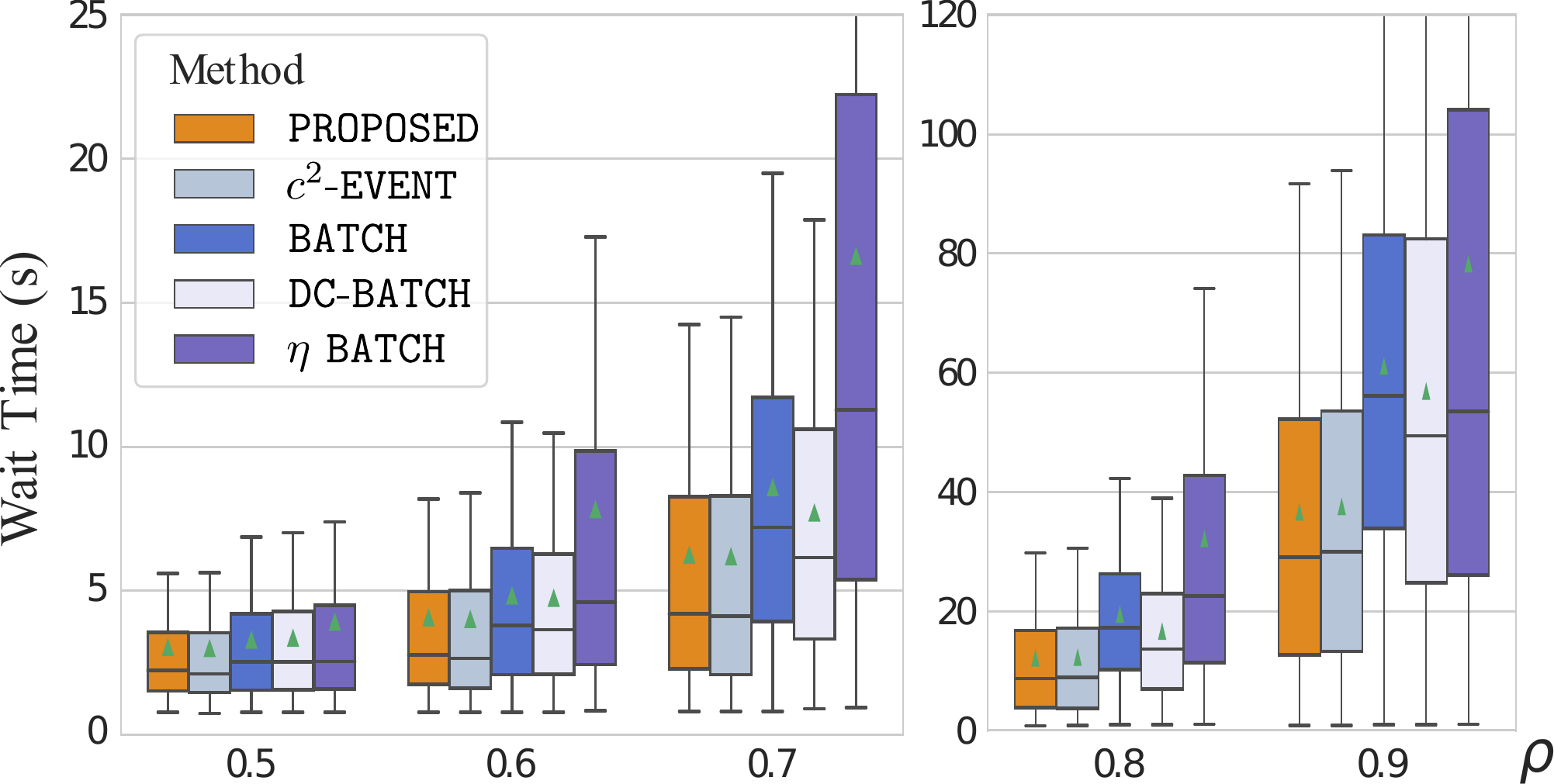}
   \caption{\footnotesize Experiment 1: Comparison against baselines in the Euclidean environment, showing wait times as a function of the load factor $\rho$. Boxes and whiskers show the four quartiles over all trials, mean wait times are indicated by the green triangle.}
 \label{fig:experiment1}
\end{figure}

%--------------------------------------------------------
% \paragraph{Compared policies}
%--------------------------------------------------------
\textbf{Baseline methods: }
We compare our approach against several DVRP baselines: $\Batch$~\cite{bullo2011dynamic}, $\eta$-$\Batch$~\cite{pavone2010adaptive}, and $\DC$-$\Batch$~\cite{pavone2010adaptive} discussed in Section \ref{sec:SolutionArchs}. 
 For $\eta$-$\Batch$ we use $\eta=0.2$, and for $\DC$-$\Batch$ we use $r=10$ sectors, consistent with~\cite{pavone2010adaptive}.
Finally, we also consider the event-triggered re-planning policy where the cost function is the quadratic wait times (\emph{i.e.,} $p=2$ in \eqref{eq:cp_cost}) from \cite{ferrucci2015general}, labelled as $c^2$-$\Cont$.

%--------------------------------------------------------
 \subsection{Experiment 1 -- Single robot in the Euclidean plane}
%--------------------------------------------------------
The first simulation environment is a unit square with 3000 tasks and a uniform arrival distribution. Service times are modeled as a Gaussian with a mean of $1$ and a standard deviation of $0.1$. We begin by considering a single robot moving at speed $v = 1$. We notice that since the multi-robot case is solved by partitioning the environment and then solving a single-robot DVRP in each partition, the results generalize trivially to multiple robots \cite{pavone2010adaptive}.

%--------------------------------------------------------
\paragraph{Algorithm parameters}
%--------------------------------------------------------
The proposed method has two hyper-parameters: the cost exponent $p$ and the batch fragment $\eta$. 
In general, smaller values of $\eta$ allow for more frequent re-planning and thus better performance; the stability  result still applies provided $\eta>0$.
We choose $\eta=0.05$, which is significantly lower than the proposed value of $0.2$ from \cite{pavone2010adaptive}. Empirically, we found that a value of $p = 1.5$ provides the best performance in terms of mean, median, and as well as number of outliers.
Thus, we use $\eta=0.05$ and $p=1.5$ for the proposed approach, simply denoted as $\mathtt{PROPOSED}$.

%--------------------------------------------------------
\paragraph{Comparison against baselines}
%--------------------------------------------------------

Figure \ref{fig:experiment1} illustrates the resulting wait times for the proposed approach and the baselines.  
As the load factor increases towards $\rho =0.9$, all baseline $\Batch$ methods show an increase in mean and variance compared to the $\mathtt{PROPOSED}$. 
The numerical results, found in Table~\ref{table:task-time-data}, additionally include a method denoted $\mathtt{PROPOSED}~\eta=0.2$, identical to $\mathtt{PROPOSED}$ (\textit{i.e.,} $p=1.5$), but with $\eta=0.2$ as in $\eta$-$\Batch$ from \cite{pavone2010adaptive}. We include this method to explicitly illustrate that any improved performance of $\mathtt{PROPOSED}$ over $\eta-\Batch$ is not simply due to the fact that $\mathtt{PROPOSED}$ features a smaller value of $\eta$. We observe that under loads from $\rho=.5$ and $\rho=.6$, the different methods perform relatively similarly, only $\eta-\Batch$ shows a higher mean and upper end of the distributions.
The method $\mathtt{PROPOSED}$ and $\mathtt{PROPOSED} \ \eta=0.2$ provide a consistently better performance than all baselines with exception of $c^2$-$\Cont$. While the smaller $\eta=0.05$ used in $\mathtt{PROPOSED}$ gives an additional performance boost, the principal improvement comes from the proposed cost function.
The mean wait times averaged over all workloads are increased by factors $1.39$ for $\Batch$, $2.14$ for $\eta$-$\Batch$, $1.28$ for $\mathtt{DC}$-$\Batch$, and $1.002$ for $c^2$-$\Cont$, compared to $\mathtt{PROPOSED}$.
Further, all approaches exhibit large deviations from the mean. However, $\mathtt{PROPOSED}$ is always among the best with respect to variances, medians and $75^{\text{th}}\%$.

In summary, the proposed method shows substantial improvements in mean and maximum wait times compared to all other $\Batch$ methods. Moreover, $c^2$-$\Cont$ -- which possesses no stability guarantees -- only achieves the same performance as the proposed approach, despite re-planning more frequently.

\begin{figure}[t]
 \centering
 \begin{subfigure}[b]{0.45\linewidth}
  \centering
  \includegraphics[width=0.95\linewidth]{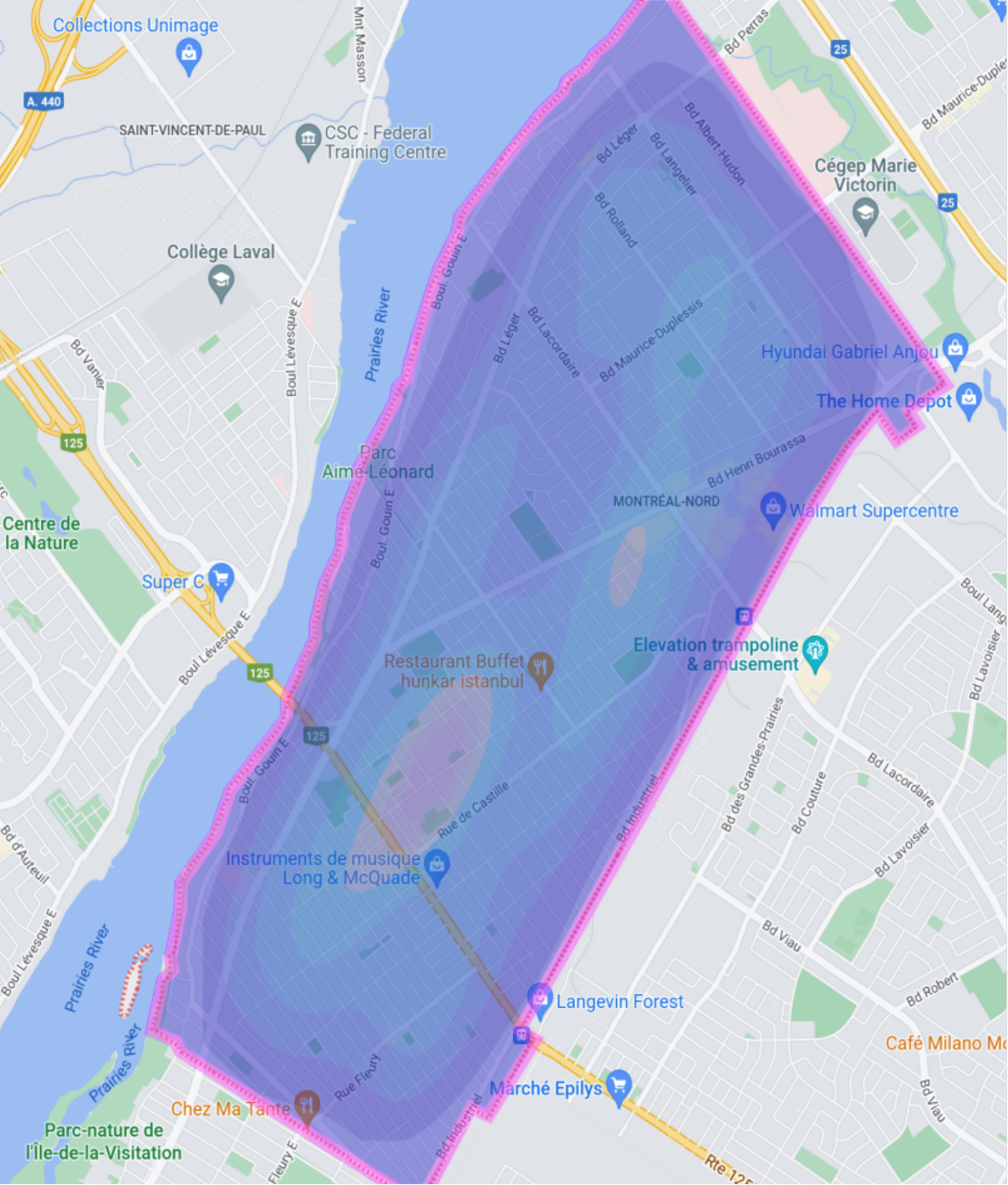}
  \caption{Borough of Montr\'eal-Nord.}
  \label{fig:montreal_map}
 \end{subfigure}%
 \hfill
 \begin{subfigure}[b]{0.55\linewidth}
  \centering
  \includegraphics[width=0.95\linewidth]{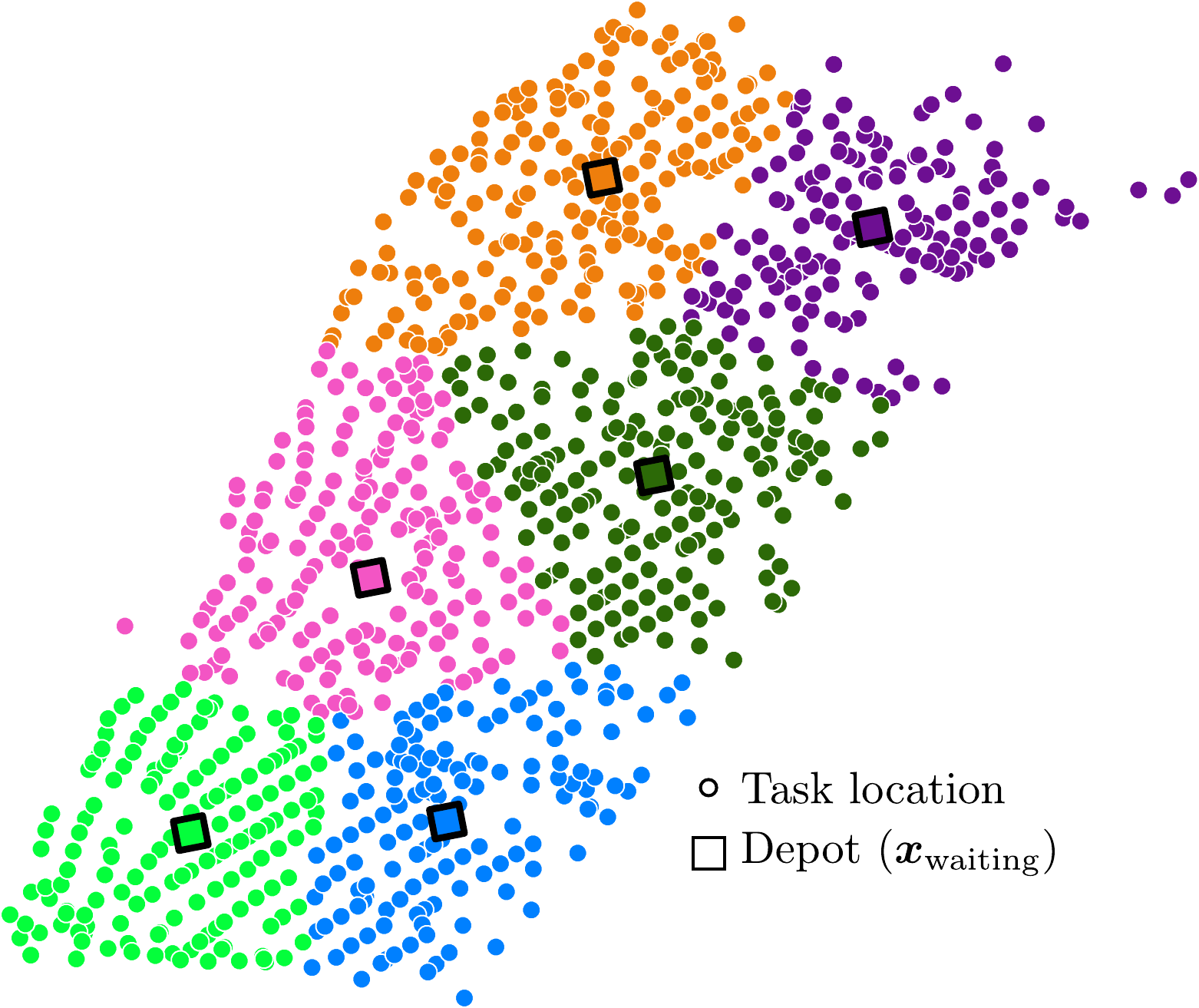}
  \caption{Partitioning for multi-robot DVR.}
  \label{fig:sim_montreal_clusters}
 \end{subfigure}
 \caption{\footnotesize Non-emergency assistance requests in the borough of Montr\'eal-Nord, Montr\'eal, Canada, between 2017-2019~\cite{montreal2016opendata}. (a): Map with the request density expressed as a heat map. (b): Partitions and depot locations.}
 \label{fig:montreal}
\end{figure}

\begin{table*}[t]
\caption{Results for Experiment 1. We show mean standard deviation ($\sigma$) and 95-percentile of {wait times} in \textit{seconds}.}
\label{table:task-time-data}
\begin{center}
\begin{tabular}{@{} l c c  c c  c c  c c  c c  @{}}
\toprule
 & \multicolumn{2}{c}{$\rho=0.5$} & \multicolumn{2}{c}{$\rho=0.6$} & \multicolumn{2}{c}{$\rho=0.7$} & \multicolumn{2}{c}{$\rho=0.8$} & \multicolumn{2}{c}{$\rho=0.9$}  \\
Method  & Mean\rpm $\sigma$ & 95\%  & Mean\rpm $\sigma$ & 95\%  & Mean\rpm $\sigma$ & 95\%  & Mean\rpm $\sigma$ & 95\%  & Mean\rpm $\sigma$ & 95\%  \\
\midrule
$\mathtt{PROPOSED}$ &   \textbf{3.0} \rpm   2.4 &   \textbf{7.7} &   4.1 \rpm   3.5 &  \textbf{11.2} &   \textbf{6.2} \rpm   5.7 &  \textbf{17.8} &  \textbf{12.1} \rpm  11.0 &  \textbf{33.8} &  \textbf{36.5} \rpm  31.0 &  \textbf{96.2}\\
$\mathtt{PROPOSED}$ $\eta$=$0.2$ &   \textbf{3.0}\rpm   2.4 &   \textbf{7.7} &   4.1 \rpm  {3.5} &  \textbf{11.2} &   6.3 \rpm   {5.7} &  17.9 &  12.4 \rpm  11.2 &  34.4 &  39.2 \rpm  31.8 &  99.8\\
\midrule
$c^2$-$\mathtt{EVENT}$ &  \textbf{ 3.0} \rpm   2.4 &   7.9 &   \textbf{4.0} \rpm   3.6 &  11.3 &   \textbf{6.2} \rpm   5.8 &  17.9 &  12.2 \rpm  11.2 &  34.8 &  37.5 \rpm  31.4 &  98.1\\
$\mathtt{BATCH}$ &   3.3 \rpm   2.3 &   8.0 &   4.8 \rpm   3.5 &  11.9 &   8.6 \rpm   6.0 &  20.3 &  19.4 \rpm  12.1 &  42.7 &  61.0 \rpm  35.5 & 127.4\\
$\mathtt{DC}$-$\mathtt{BATCH}$ &   3.3 \rpm   2.4 &   8.3 &   4.7 \rpm   3.5 &  12.0 &   7.7 \rpm   5.7 &  19.1 &  16.6 \rpm  12.2 &  41.2 &  56.8 \rpm  39.0 & 130.5\\
$\eta$-$\mathtt{BATCH}$ &   3.9 \rpm   3.9 &  11.5 &   7.8 \rpm   8.6 &  24.9 &  16.6 \rpm  16.4 &  49.4 &  32.1 \rpm  30.5 &  92.1 &  78.2 \rpm  76.7 & 229.3\\
\bottomrule
\end{tabular}
\end{center}
\end{table*}
\subsection{Experiment 2 -- Multiple Robots with real-world data set} 
%--------------------------------------------------------

As a further illustration of the applicability of our method, we simulate task arrivals for an on-site service technician based on historical 3--1--1 calls in Montreal-Nord, a borough of the city of Montreal, Canada~\cite{montreal2016opendata} (see Figure \ref{fig:montreal}). For this experiment, we deploy a fleet of $m=6$ robots to service $5000$ tasks. Thus, we divide the environment into 6 partitions using a $k$-nearest clustering of the request locations, shown in Figure \ref{fig:sim_montreal_clusters}.
Despite not guaranteeing equitable partitions, $k$-nearest methods are widely used in practise.
%equitable partitions \cite{pavone2010adaptive} with each robot servicing all tasks within their respective partition as shown in Figure \ref{fig:sim_montreal_clusters}.
%
All map data taken from OpenStreetMaps\footnote{Map data copyrighted OpenStreetMap contributors and available from \url{https://www.openstreetmap.org}}. The average travel time over the entire map is $296s$, service times are drawn from a normal distribution with a mean of $10$min and variance of $3$min.
We used an overall load factor of $\rho=.74$, selected to guarantee $\rho < 1$ within each partition.

Results are shown in Figure \ref{fig:sim_montreal_baselines} and Table \ref{table:montreal-task-time-data}. 
We observe that since the used partitioning technique is not equitable the wait times vary significantly between partitions.
In fact, the load in partitions 1 and 2 is at the lower end of moderate load ($\approx .6$), while partition 6 reaches a single-robot load factor of $.99$. Nonetheless, for all partitions the proposed method achieves the lowest mean, and with exception of region 1 and 6 the lowest $95th$ percentile. 
Mean wait times are reduced by $20\%$ compared to the best baseline ($\mathtt{DC}$-$\mathtt{BATCH}$), \textit{i.e.,} $30min$. Moreover, we notice that the advantage of our method increases for the partitions under higher loads.
In summary, we have shown the proposed method also effectively improves task wait times in a real-world multi-robot setting.

\begin{figure}
 \centering
 \includegraphics[width=0.9\linewidth]{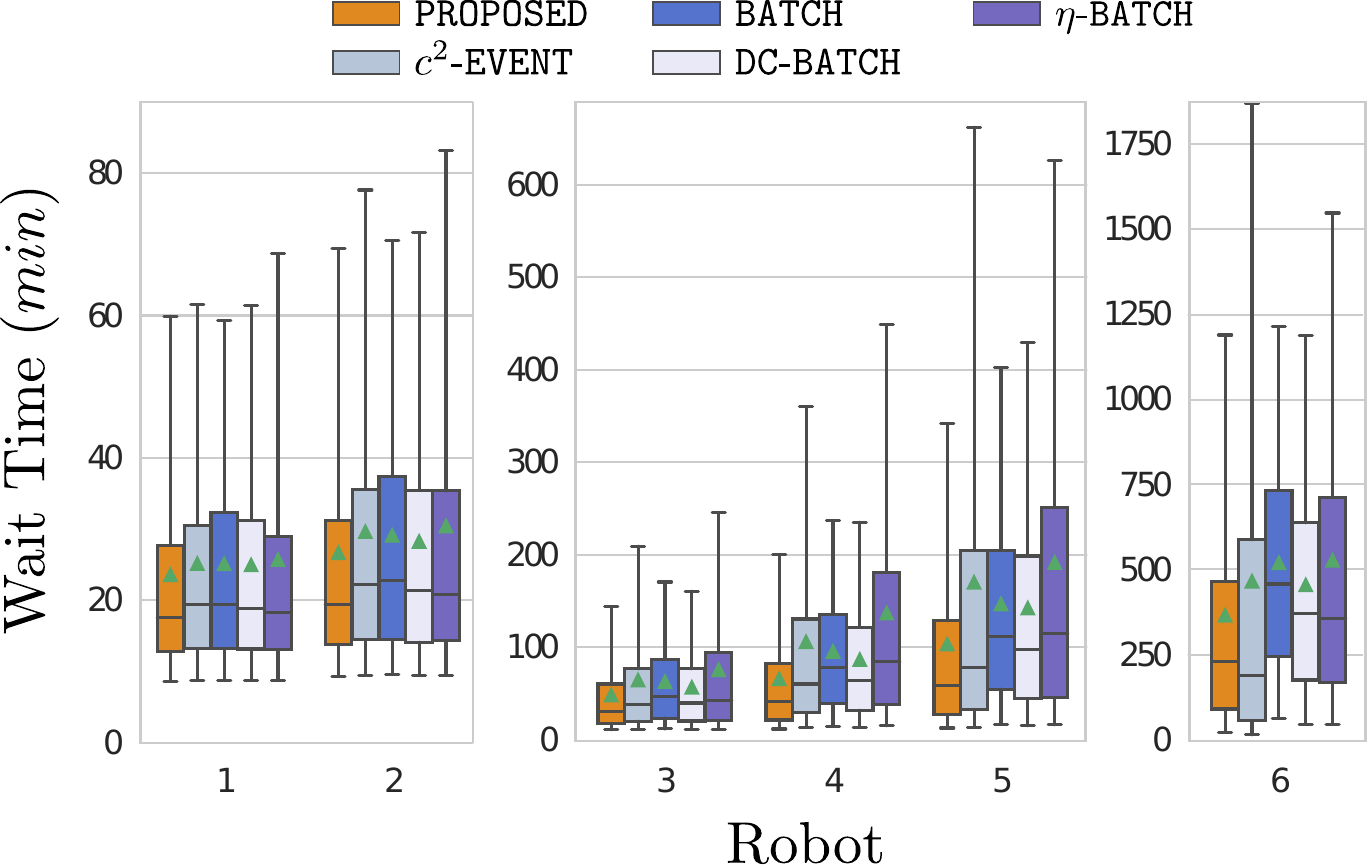}
 \caption{\footnotesize Experiment 2: Comparison against baselines. Wait times for a load factor $\rho=0.74$, separated by partition. Boxes and whiskers show the four quartiles over all trials, mean wait times are indicated by the green triangle.}
 \label{fig:sim_montreal_baselines}
\end{figure}

%%%%%
% Table Data for MONTREAL NORD data
%
\begin{table*}
\caption{Results for Experiment 2, separated by region. We show mean, standard deviation $(\sigma)$ and 95-percentile of {wait times} in \textit{minutes}.}
\label{table:montreal-task-time-data}
\begin{center}
\begin{tabular}{@{} l c c  c c  c c  c c  c c  c c  c c  @{}}
\toprule
 & \multicolumn{2}{c}{Robot 1} & \multicolumn{2}{c}{Robot 2} & \multicolumn{2}{c}{Robot 3} & \multicolumn{2}{c}{Robot 4} & \multicolumn{2}{c}{Robot 5} & \multicolumn{2}{c}{Robot 6} & \multicolumn{2}{c}{All} \\
Method  & Mean\rpm $\sigma$ & 95\%  & Mean\rpm $\sigma$ & 95\%  & Mean\rpm $\sigma$ & 95\%  & Mean\rpm $\sigma$ & 95\%  & Mean\rpm $\sigma$ & 95\%  & Mean\rpm $\sigma$ & 95\%  & Mean\rpm $\sigma$ & 95\%  \\
\midrule
$\mathtt{PROPOSED}$ & \textbf{24}\rpm18 & 60 & \textbf{27}\rpm22 & \textbf{69} & \textbf{48}\rpm49 & \textbf{144} & \textbf{66}\rpm70 & \textbf{201} & \textbf{103}\rpm127 & \textbf{342} & \textbf{367}\rpm450 & 1190& \textbf{122}\rpm250 & \textbf{503}\\
\midrule
$c^2$-$\mathtt{EVENT}$ & 25\rpm19 & 62 & 30\rpm24 & 78 & 65\rpm76 & 209 & 106\rpm123 & 361 & 170\rpm237 & 663 & 467\rpm680 & 1868& 161\rpm361 & 715\\
$\mathtt{BATCH}$ & 25\rpm17 & \textbf{59} & 29\rpm20 & 70 & 63\rpm52 & 171 & 95\rpm71 & 237 & 147\rpm123 & 403 & 520\rpm351 & 1214& 168\rpm250 & 736\\
$\mathtt{DC}$-$\mathtt{BATCH}$ & 25\rpm18 & 62 & 28\rpm21 & 72 & 57\rpm50 & 161 & 87\rpm71 & 236 & 142\rpm132 & 429 & 456\rpm359 & \textbf{1187}& 152\rpm237 & 650\\
$\eta$-$\mathtt{BATCH}$ & 26\rpm22 & 69 & 30\rpm28 & 83 & 75\rpm89 & 246 & 137\rpm150 & 449 & 192\rpm224 & 628 & 528\rpm531 & 1553& 187\rpm323 & 792\\
\bottomrule
\end{tabular}
\end{center}
\end{table*}

%--------------------------------------------------------
\section{Conclusion}
%--------------------------------------------------------
We revisited the classic Dynamic Vehicle Routing Problem for stochastic arrivals. We proposed a new $\Batch$ policy that seeks to minimize both average and maximum wait times and showed that this policy is stable even under heavy loads. 
In simulations we showed that the proposed method outperforms existing baseline policies under various moderate load settings ($\rho\in[0.5, 0.9]$). The baseline of event-triggered re-planning, $c^2$-$\Cont$, with a quadratic cost shows a performance comparable to ours in the Euclidean case, yet this method is computationally more burdensome and does not come with theoretical guarantees on stability.
Future work should investigate the applicability of the proposed $p$-norm cost for other variants of DVR, such as pick-up-and-delivery where the problem cannot be cast into several single-robot instances.

\bibliographystyle{IEEEtran}
\bibliography{references}

\end{document}